\def\eqref#1{equation~\ref{#1}}
\def\1{\bm{1}}
\def\va{{\bm{a}}}
\def\vc{{\bm{c}}}
\def\vf{{\bm{f}}}
\def\vh{{\bm{h}}}
\def\vk{{\bm{k}}}
\def\vq{{\bm{q}}}
\def\vu{{\bm{u}}}
\def\vv{{\bm{v}}}
\def\mD{{\bm{D}}}
\def\mG{{\bm{G}}}
\def\mR{{\bm{R}}}
\def\mW{{\bm{W}}}
\DeclareMathAlphabet{\mathsfit}{\encodingdefault}{\sfdefault}{m}{sl}
\SetMathAlphabet{\mathsfit}{bold}{\encodingdefault}{\sfdefault}{bx}{n}
\newcommand{\tens}[1]{\bm{\mathsfit{#1}}}
\def\tD{{\tens{D}}}
\def\tR{{\tens{R}}}
\def\tT{{\tens{T}}}
\DeclareMathOperator*{\argmin}{arg\,min}
\theoremstyle{plain}
\newtheorem{theorem}{Theorem}[section]
\newtheorem{proposition}[theorem]{Proposition}
\theoremstyle{definition}
\theoremstyle{remark}
\begin{document}

\hypersetup{
	plainpages=false,
    linktocpage=true, 
    linkcolor={red!70!black},
    citecolor={green!60!black},
    urlcolor={cyan!50!black}
} 

\begin{frontmatter}
\title{Mitigating spectral bias for the multiscale operator learning
}

\author[inst1]{Xinliang Liu}
\affiliation[inst1]{organization={Computer, Electrical and Mathematical Science and Engineering Division, King Abdullah University of Science and Technology
},
            city={Thuwal},
            postcode={23955}, 
            country={Saudi Arabia}}

\author[inst2]{Bo Xu}
\affiliation[inst2]{organization={School of Mathematical Sciences, Shanghai Jiao Tong University},
            city={Shanghai},
            postcode={200240}, 
            country={China}}

\author[inst3]{Shuhao Cao}
\affiliation[inst3]{organization={School of Science and Engineering, University of Missouri-Kansas City},
            city={Kansas City},
            postcode={64110}, 
            state={MO},
            country={United States}}

\author[inst4]{Lei Zhang}
\affiliation[inst4]{organization={School of Mathematical Sciences, Institute of Natural Sciences and MOE-LSC, Shanghai Jiao Tong University},
            city={Shanghai},
            postcode={200240}, 
            country={China}}

\begin{abstract}
Neural operators have emerged as a powerful tool for learning the mapping between infinite-dimensional parameter and solution spaces of partial differential equations (PDEs). In this work, we focus on multiscale PDEs that have important applications such as reservoir modeling and turbulence prediction. We demonstrate that for such PDEs, the spectral bias towards low-frequency components presents a significant challenge for existing neural operators. To address this challenge, we propose a { hierarchical attention neural operator (HANO) inspired by the hierarchical matrix approach}. HANO features a scale-adaptive interaction range and self-attentions over a hierarchy of levels, enabling nested feature computation with controllable linear cost and encoding/decoding of multiscale solution space. We also incorporate an empirical $H^1$ loss function to enhance the learning of high-frequency components. Our numerical experiments demonstrate that HANO outperforms state-of-the-art (SOTA) methods for representative multiscale problems. 
\end{abstract}

\begin{keyword}
partial differential equations \sep operator learning \sep transformer \sep multiscale PDE
\end{keyword}

\end{frontmatter}

\section{Introduction}
\label{sec:introduction}


In recent years, operator learning methods have emerged as powerful tools for computing parameter-to-solution maps of partial differential equations (PDEs). In this paper, we focus on the operator learning for multiscale PDEs (MsPDEs) that encompass multiple temporal/spatial scales. MsPDE models arise in applications involving heterogeneous and random media, and are crucial for predicting complex phenomena such as reservoir modeling, atmospheric and ocean circulation, and high-frequency scattering. Important prototypical examples include multiscale elliptic partial differential equations, where the diffusion coefficients vary rapidly. The coefficient can be potentially rapidly oscillatory, have high contrast ratio, or even bear a continuum of non-separable scales.

MsPDEs, even with fixed parameters, present great challenges for classical numerical methods \cite{Branets2009}, as their computational cost typically scales inversely proportional to the finest scale $\varepsilon$ of the problem. To overcome this issue, multiscale solvers have been developed by incorporating microscopic information to achieve computational cost independent of $\varepsilon$. One such technique is \emph{numerical homogenization} \citep{Engquist2008,Hou1999,eh09,Efendiev2013,chung2016adaptive,chung2023multiscale}, which identifies low-dimensional approximation spaces adapted to the corresponding multiscale operator. Similarly, fast solvers like \emph{multilevel/multigrid methods} \citep{Hackbusch1985,XuZikatanov:2017} and \emph{wavelet-based multiresolution methods} \citep{Brewster1995,Beylkin1998} may face limitations when applied to multiscale PDEs \citep{Branets2009}, while multilevel methods based on numerical homogenization techniques, such as Gamblets \citep{OwhadiMultigrid:2017}, have emerged as a way to discover scalable multilevel algorithms and operator-adapted wavelets for multiscale PDEs. Low-rank decomposition-based methods are another popular approach to exploit the low-dimensional nature of MsPDEs. Notable example include the fast multipole method \cite{Greengard1987}, hierarchical matrices ($\mathscr{H}$ and $\mathscr{H}^2$ matrices) \cite{Hackbusch2002}, and hierarchical interpolative factorization \cite{Ho2016}. These methods can achieve (near-)linear scaling and high computational efficiency by exploiting the low-rank approximation of the (elliptic) Green's function \cite{Bebendorf2005}.

Neural operators, unlike traditional solvers that operate with fixed parameters, are capable of handling a range of input parameters, making them promising for data-driven forward and inverse solving of PDE problems. Pioneering work in operator learning methods include \cite{ZhuZabaras:2018,Fan2019,fan2019multiscale,Khoo2020}. Nevertheless, they are limited to problems with fixed discretization sizes. Recently, infinite-dimensional operator learning has been studied, which learns the solution operator (mapping) between infinite-dimensional Banach spaces for PDEs. Most notably, the Deep Operator Network (DeepONet) \cite{lu2021learning} was proposed as a pioneering model to leverage deep neural networks' universal approximation for operators \cite{chen1995universal}. Taking advantage of the Fast Fourier Transform (FFT), Fourier Neural Operator (FNO) \cite{li2020fourier} constructs a learnable parametrized kernel in the frequency domain to render the convolutions in the solution operator more efficient. Other developments include the multiwavelet extension of FNO \cite{gupta2021multiwavelet}, Message-Passing Neural Operators \cite{brandstetter2022message}, dimension reduction in the latent space \cite{seidman2022nomad}, Gaussian Processes \cite{chen2021solving}, Clifford algebra-inspired neural layers \cite{brandstetter2023clifford}, and Dilated convolutional residual network \cite{stachenfeld2022learned}.

Attention neural architectures, popularized by the Transformer deep neural network \cite{vaswani2017attention}, have emerged as universal backbones in Deep Learning. These architectures serve as the foundation for numerous state-of-the-art models, including GPT \cite{brown2020language}, Vision Transformer (ViT) \cite{dosovitskiy2020image}, and Diffusion models \cite{ho2020denoising,rombach2022high}. More recently, Transformers have been studied and become increasingly popular in PDE operator learning problems, e.g., in \cite{cao2021choose,geneva2022transformers,kissas2022learning,li2023transformer,hao2023gnot,pmlr-v202-de-oliveira-fonseca23a,xiao2023improved} and many others. There are several advantages in the attention architectures. Attention can be viewed as a parametrized instance-dependent kernel integral to learn the ``basis'' \cite{cao2021choose} similar to those in the numerical homogenization; see also the exposition featured in neural operators \cite{2023JMLRNeural}. This layerwise latent updating resembles the learned ``basis'' in DeepONet \cite{hao2023gnot}, or frame \cite{bartolucci2023neural}. It is flexible to encode the non-uniform geometries in the latent space \cite{liu2023nuno}. In \cite{ovadia2023ditto,ovadia2023vito}, advanced Transformer architectures (ViT) and Diffusion models are combined with the neural operator framework. In \cite{2023HemmasianFarimani}, Transformers are combined with reduced-order modeling to accelerate the fluid simulation for turbulent flows. In \cite{2023NIPSLiShuFarimani},  { tensor decomposition techniques are employed to enhance the efficiency of attention mechanisms in solving high-dimensional partial differential equation (PDE) problems.} 

Among these data-driven operator learning models, under certain circumstances, the numerical results could sometimes overtake classical numerical methods in terms of efficiency or even in accuracy.
For instance, full wave inversion is considered in \cite{zhu2023fourier} with the fusion model of FNO and DeepONet (Fourier-DeepONet); direct methods-inspired DNNs are applied to the boundary value Calder\'{o}n problems achieve much more accurate reconstruction with the help of data \cite{guo2021construct,guo2023transformer,guo2023learn}; in \cite{mizera2023scattering}, the capacity of FNO to jump significantly large time steps for spatialtemporal PDEs is exploited to infer the wave packet scattering in quantum physics and achieves magnitudes more efficient result than traditional implicit Euler marching scheme. \cite{2022LiFarimani} exploits the capacity of graph neural networks to accelerate particle-based simulations. \cite{zhang2022hybrid} investigates the integration of the neural operator DeepONet with classical relaxation techniques, resulting in a hybrid iterative approach. Meanwhile, Wu et al.~\cite{wu2024capturing} introduce an asymptotic-preserving convolutional DeepONet designed to capture the diffusive characteristics of multiscale linear transport equations.

For multiscale PDEs, operator learning methods can be viewed as an advancement beyond multiscale solvers such as numerical homogenization. Operator learning methods have two key advantages: (1) They can be applied to an ensemble of coefficients/parameters, rather than a single set of coefficients, which allows the methods to capture the stochastic behaviors of the coefficients; (2) The decoder in the operator learning framework can be interpreted as a data-driven basis reduction procedure from the latent space (high-dimensional) that approximates the solution data manifold (often lower-dimensional) of the underlying PDEs. This procedure offers automated data adaptation to the coefficients, enabling accurate representations of the solutions' distributions. In contrast, numerical homogenization typically relies on a priori bases that are not adapted to the ensemble of coefficients. In this regard, the operator learning approach has the potential to yield more accurate reduced-order models for multiscale PDEs with parametric/random coefficients. 

However, for multiscale problems, current operator learning methods have primarily focused on representing the smooth parts of the solution space. This results in the so-called ``spectral bias'', leaving the resolution of intrinsic multiscale features as a significant challenge. The spectral bias, also known as the frequency principle \cite{rahaman2018spectral,ronen2019the,xu2020frequency}, states that deep neural networks (DNNs) often struggle to learn high-frequency components of functions that vary at multiple scales.
In this regard, Fourier or wavelet-based methods are not always effective for MsPDEs, even for fixed parameters. Neural operators tend to fit low-frequency components faster than high-frequency ones, limiting their ability to accurately capture fine details. When the elliptic coefficients are smooth, the coefficient to solution map can be well resolved by the FNO parameterization \cite{li2020fourier}. Nevertheless, existing neural operators have difficulty learning high-frequency components of multiscale PDEs, as is shown in Figure \ref{fig:gamblet_fig} and detailed in Section \ref{sec:experiments}.
 While the universal approximation theorems can be proven for FNO type models (see e.g., \cite{kovachki2021universal}), achieving a meaningful decay rate requires ``extra smoothness'', which may be absent or lead to large constants for MsPDEs. For FNO, this issue was partially addressed in \cite{zhao2022incremental}, yet the approach there needs an ad-hoc manual tweak on the weights for the modes chosen. 
 
 We note that for fixed parameter MsPDEs, In recent years, there has been increasing exploration of neural network methods for solving multiscale PDEs despite the spectral bias or frequency principle \citep{rahaman2018spectral, ronen2019the, xu2020frequency} indicating that deep neural networks (DNNs) often struggle to effectively capture high-frequency components of functions. Specifically designed neural solvers \cite{Li_2020, WANG2021113938, li2021subspace} have been developed to mitigate the spectral bias and accurately solve multiscale PDEs with fixed parameters.

Motivated by the aforementioned challenges, we investigate the spectral bias present in existing neural operators. Inspired by conventional multilevel methods and numerical homogenization, we propose a new \textbf{H}ierarchical \textbf{A}ttention \textbf{N}eural \textbf{O}perator (HANO) architecture to mitigate it for multiscale operator learning. We also test our model on standard operator learning benchmarks including the Navier-Stokes equation in the turbulent regime, and the Helmholtz equation in the high wave number regime.
Our main contributions can be summarized as follows:


\begin{itemize}
\item We introduce HANO, 
 that decomposes input-output mapping into hierarchical levels in an automated fashion, and enables nested feature updates through hierarchical local aggregation of self-attentions with a controllable linear computational cost.

\item We use an empirical $H^1$ loss function to further reduce the spectral bias and improve the ability to capture the oscillatory features of the multiscale solution space;

\item We investigate the spectral bias in the existing neural operators and empirically verify that HANO is able to mitigate the spectral bias. HANO substantially improves accuracy, particularly for approximating derivatives, and generalization for multiscale tasks, compared with state-of-the-art neural operators and efficient attention/transformers. 
\end{itemize}

\begin{figure}[htbp]
    \centering
    \subfigure[\scriptsize multiscale trigonometric coefficient,]{\includegraphics[width=0.23\textwidth]{  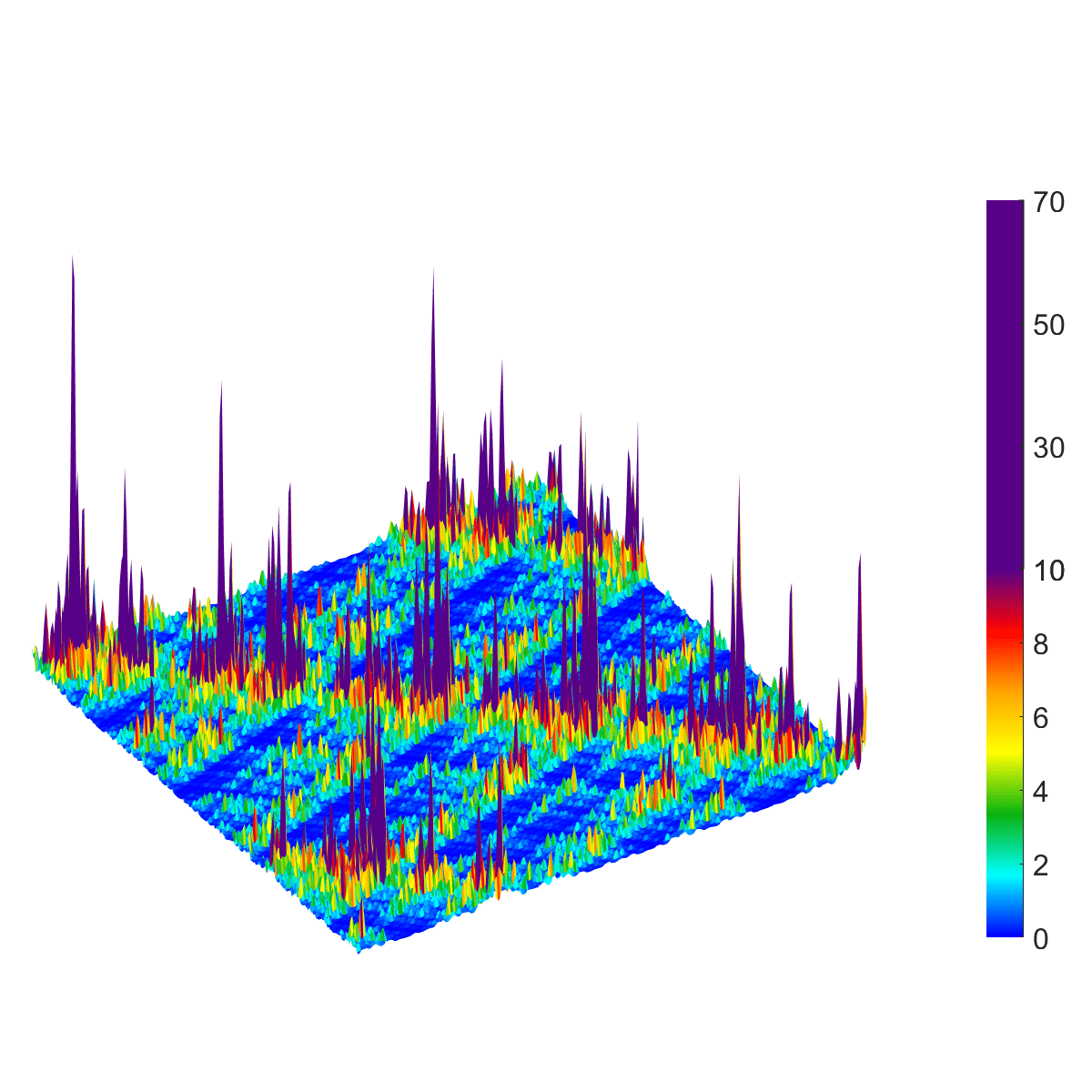}}
    \subfigure[\scriptsize slices of the derivatives $\frac{\partial u}{\partial y}$ at $x=0$,]{\includegraphics[width=0.23\textwidth]{  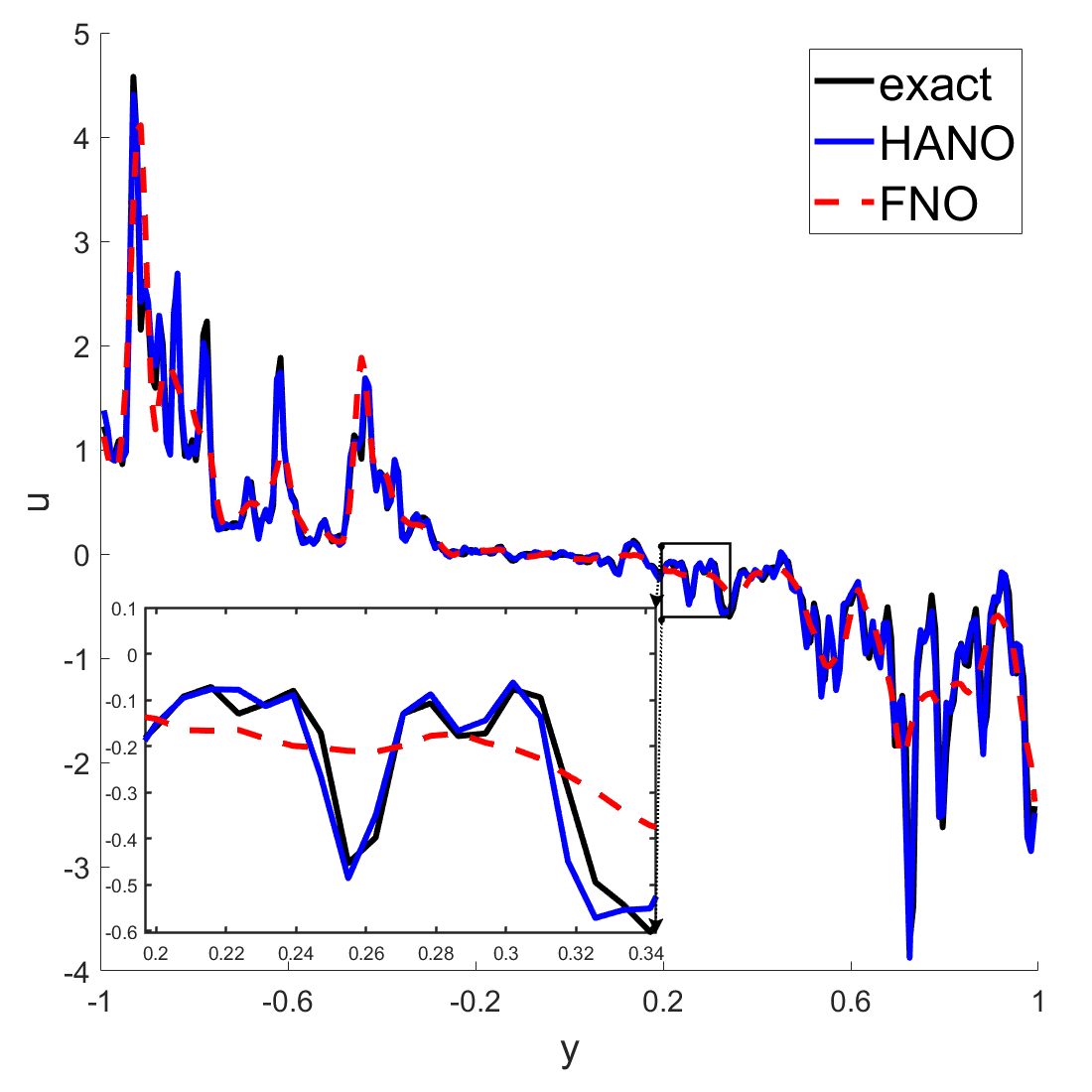}}
    \subfigure[\scriptsize absolute error spectrum of HANO in $\log_{10}$ scale]{\includegraphics[width=0.23\textwidth]{  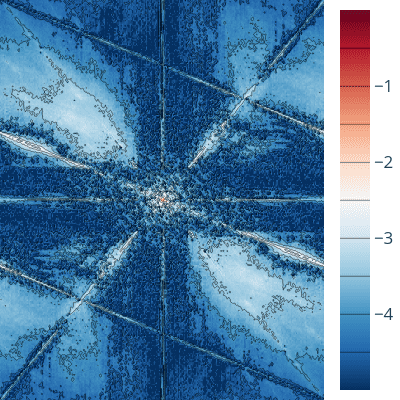}}
    \subfigure[\scriptsize absolute error spectrum of FNO in $\log_{10}$ scale.]{\includegraphics[width=0.23\textwidth]{  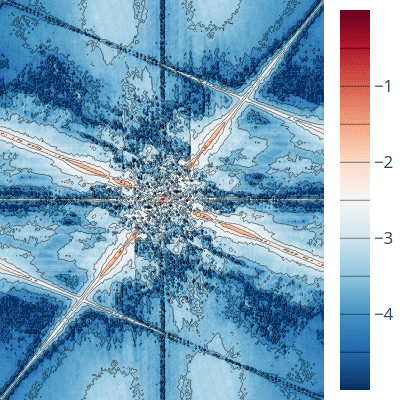}}
    \caption{We illustrate the effectiveness of the HANO scheme on the challenging multiscale trigonometric benchmark, with the coefficients and corresponding solution derivative shown in (a) and (b), see Appendix \ref{sec:additonaltests:multiscaletrignometric} for problem description. We notice that HANO can capture the solution derivatives more accurately, whereas FNO only captures their averaged or homogenized behavior. In (c) and (d), we analyze the  error by decomposing it into the frequency domain $[-256\pi, 256\pi]^2$ and plotting the absolute error spectrum. This shows the spectral bias in the existing state-of-the-art model, and also our method achieves superior performance in predicting fine-scale features, especially accurately capturing derivatives. We refer readers to Figure \ref{fig:spectral_bias} in Section \ref{sec:experiments:darcy} and Figures 
    \ref{fig:spectral_bias}, \ref{fig:gamblet_fig2},  \ref{fig:gamblet_error_freqerror}.  }
    \label{fig:gamblet_fig}
\end{figure}

\section{Methods}  
\label{sec:methods}

\def\lspace{\mathcal{V}\left(D ; \mathbb{R}^{\mathcal{C}}\right)}

In this section, to address the spectral bias for multiscale operator learning, and motivated by the remarkable performance of attention-based models \cite{vaswani2017attention,liu2021swin} in computer vision and natural language processing tasks, as well as the effectiveness of hierarchical matrix approach \cite{Hackbusch2002} for multiscale problems, we propose the Hierarchical Attention Neural Operator (HANO) model. 

\subsection{Operator Learning Problem}
We follow the setup in \cite{li2020fourier, lu2021learning} to approximate the operator $\mathcal{S}: a \mapsto u:=\mathcal{S}(a)$, with the input/parameter $a\in\mathcal{A}$ drawn from a distribution $\mu$ and the corresponding output/solution $u\in\mathcal{U}$, where $\mathcal{A}$ and $\mathcal{U}$ are infinite-dimensional Banach spaces, respectively. Our aim is to learn the operator $\mathcal{S}$ from a collection of finitely observed input-output pairs through a parametric map $\mathcal{N}: \mathcal{A} \times \Theta \rightarrow \mathcal{U}$ and a loss functional $\mathcal{L}: \mathcal{U} \times \mathcal{U} \rightarrow \mathbb{R}$, such that the optimal parameter
$$
\theta^* = \argmin _{\theta \in \Theta} \mathbb{E}_{a \sim \mu}\left[\mathcal{L}\left(\mathcal{N}(a, \theta), \mathcal{S}(a)\right)\right].
$$



\subsubsection{Hierarchical Discretization}

To develop a hierarchical attention, first we assume that there is a hierarchical discretization of the spatial domain $D$.
For an input feature map that is defined on a partition of $D$, for example, of resolution $8\times 8$ patches, we define $\mathcal{I}^{(3)}:=\{i=(i_1,i_2,i_3)\, | i_1,i_2,i_3 \in \{0,1,2,3\}\}$ as the finest level index set, in which each index $i$ corresponds to a patch \emph{token} characterized by a feature vector $\vf_i^{(3)} \in \mathbb{R}^{\mathcal{C}^{(3)}}$. For a token $i=(i_1,i_2,i_3)$, its parent token $j=\left(i_1, i_2\right)$ aggregates finer level tokens (e.g., $(1,1)$ is the parent of $(1,1,0), (1,1,1), (1,1,2), (1,1,3)$ in Figure \ref{fig:quadtree}), { characterized by a feature vector $\vf_{j}^{(2)} \in \mathbb{R}^{\mathcal{C}^{(2)}}$}. We postpone describing the aggregation scheme in the following paragraph. { In general, we write   $\mathcal{I}^{(m)}:=\{i=(i_1,i_2,...,i_m)\, | i_\ell \in \{0,1,2,3\} \text{ for } \ell=1,...,m\}$ as the index set of $m$-th level tokens, and $\mathcal{I}^{(r)}$ for $r\geq 1$ denotes the index set of the finest level tokens.} Note that the hierarchy is not restricted to the quadtree setting.
\begin{figure}[htb] 
\centering
        \includegraphics[width=.4\linewidth]{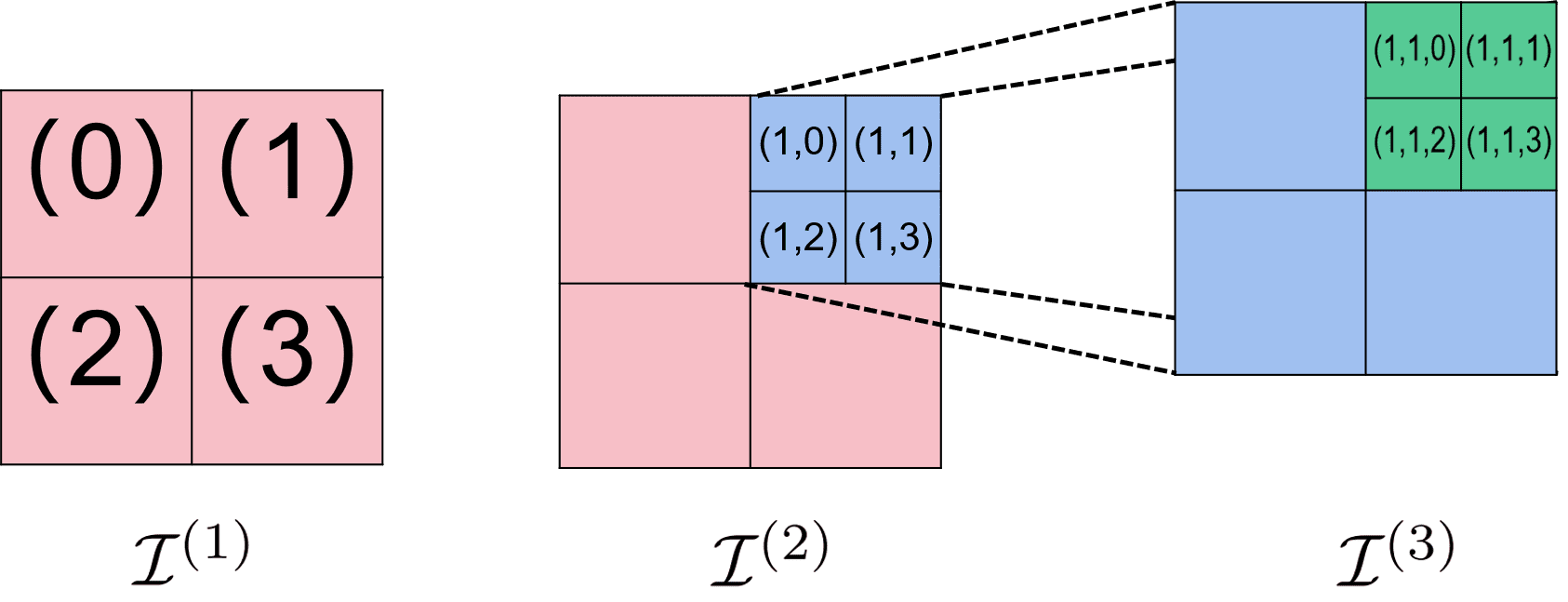}
        \caption{ \small Hierarchical discretization and index tree. The 2D unit square is discretized hierarchically into three levels with corresponding index sets $\mathcal{I}^{(1)}, \mathcal{I}^{(2)}$, and $\mathcal{I}^{(3)}$. To illustrate, $(1)^{(1,2)}$ represents the second level child nodes of node $(1)$ and is defined as $(1)^{(1,2)}=\{(1,0),(1,1),(1,2),(1,3)\}$. }
        \label{fig:quadtree} 
\end{figure}

\subsection{Vanilla Attention Mechanism}

In this section, we first revisit the vanilla scaled dot-product attention mechanism for a single-level discretization. {Without loss of generality, for example, we consider the finest level tokens, which is \( \vf_i^{(r)} \in \mathbb{R}^{\mathcal{C}^{(r)}} \), are indexed by \( i\in\mathcal{I}^{(r)} \). The token aggregation formula on this level can then be expressed as:}

\begin{equation}
\textbf{atten}: \vh_i^{(r)} = \sum_{j\in \mathcal{I}^{(r)}} \mathcal{G}(\vq_i^{(r)}, \vk_j^{(r)})\, \vv_j^{(r)},
\label{eqn:vanilla_attn}
\end{equation}
where $\boldsymbol{q}_i^{(r)} = \mW^{Q}\vf_i^{(r)}$, $\boldsymbol{k}_i^{(r)} = \mW^{K}\vf_i^{(r)}$, $\boldsymbol{v}_i^{(r)} = \mW^{V} \vf_i^{(r)}$, and $\mW^{Q}$, $\mW^{K}$, $\mW^{V} \in \mathbb{R}^{\mathcal{C}^{(r)}\times \mathcal{C}^{(r)}}$ are learnable matrices. Here, for simplicity, we use the function $\mathcal{G}$ to represent a pairwise interaction between queries and keys in the self-attention mechanism. Note that in the conventional self-attention mechanism \cite{vaswani2017attention}, the pairwise interaction potential is defined as follows:
\begin{equation}
    \mathcal{G}(\vq_i^{(r)}, \vk_j^{(r)}):=\exp(\vq_i^{(r)} \cdot \vk_j^{(r)}/ \sqrt{\mathcal{C}^{(r)}})
\end{equation}  
and further normalized to have row sum $1$, i.e., the $\text{softmax}$ function is applied row-wise to the matrix whose $(i,j)$-entry is $\vq_i^{(r)} \cdot \vk_j^{(r)}$. Note that the $1/\sqrt{\mathcal{C}^{(r)}}$ factor is optional and can be set to $1$ instead. {To be more specific, the vanilla self-attention is finally defined by 
\begin{equation}
    \textbf{vanilla atten}: \vh_i^{(r)} = \sum_{j\in \mathcal{I}^{(r)}} \frac{\mathcal{G}(\vq_i^{(r)}, \vk_j^{(r)})}{\sum_{j\in \mathcal{I}^{(r)}} \mathcal{G}(\vq_i^{(r)}, \vk_j^{(r)})}\vv_j^{(r)}.
\end{equation}}

\subsection{Hierarchical attention}

In this section, we present HANO in Algorithm \ref{alg:Hierarchical Attention}, a hierarchically nested attention scheme with $\mathcal{O}(N)$ cost inspired by $\mathscr{H}^2$ matrices \cite{Hackbusch2015}, which is much more efficient than the vanilla attention above that scales with $\mathcal{O}(N^2)$. The overall HANO scheme (e.g., for a three-level example see Figure \ref{fig:vcycle}) resembles the V-cycle operations in multigrid methods, and it comprises four key operations: reduce, multilevel local attention, and decompose\&mix. In this procedure, instead of using global attention aggregation as in \eqref{eqn:vanilla_attn}, we utilize a local aggregation formula inspired by the $\mathcal{H}$ matrix approximation in the step of multilevel local attention. This approximation decomposes global interactions into local interactions at different scales ({ levels of tokens, denoted by $m$ of $\mathcal{I}^{(m)}$}). Empirically in Section \ref{sec:experiments}, this decomposition has a very minimal loss of expressivity.

\begin{figure}[htb]
    \centering
    \includegraphics[width=0.5\linewidth]{  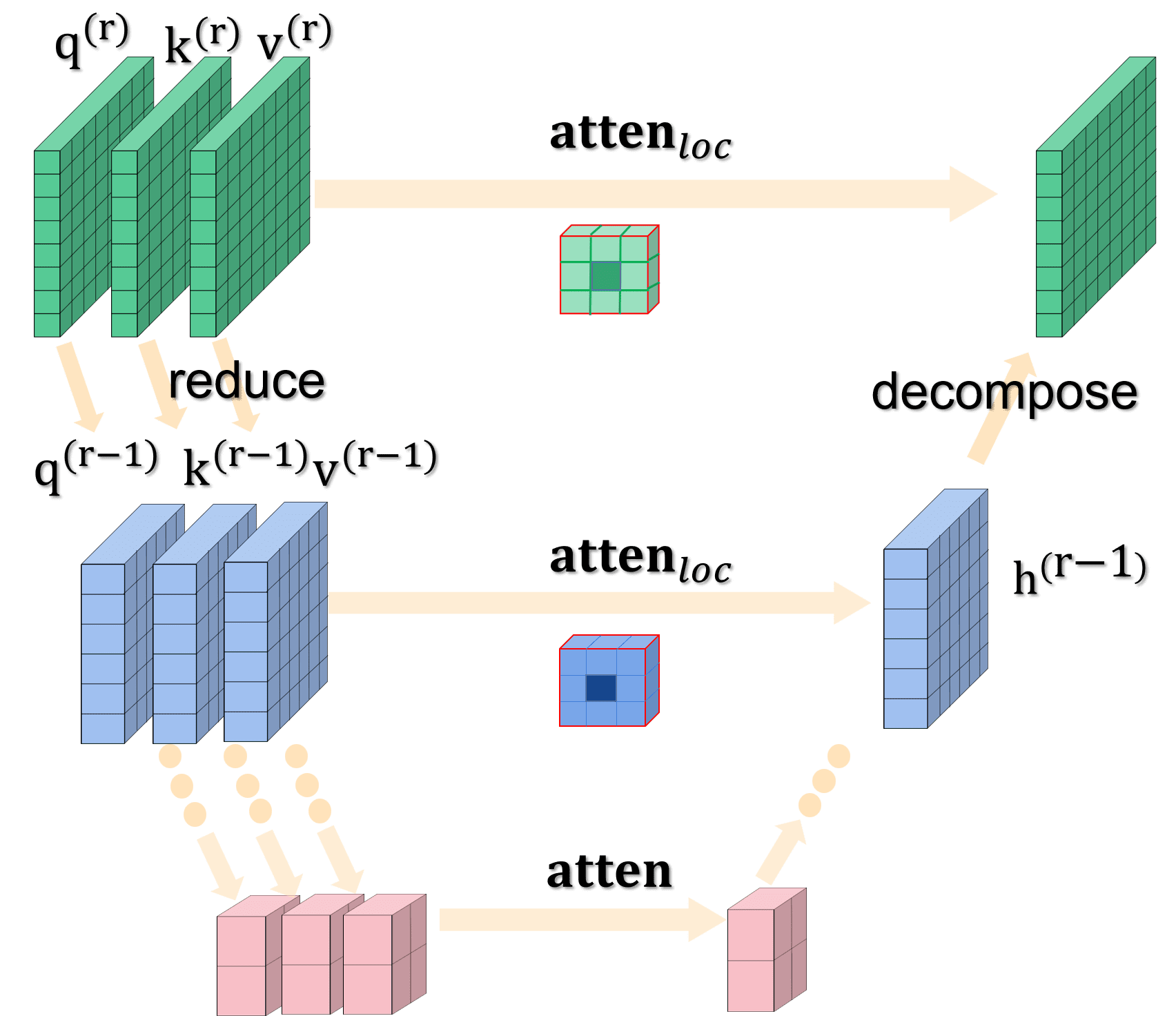}   
    \caption{\small Hierarchically nested attention.}
        \label{fig:vcycle} 
\end{figure}
\subsubsection{Reduce Operation \emph{using the quadtree hierarchy}}

The \emph{reduce} operation aggregates finer-level tokens into coarser-level tokens in the hierarchy. We denote $i^{(m, m+1)}$ as the set of indices of the $(m+1)$-th level child tokens of the $m$-th level token $i$, where $i\in \mathcal{I}^{(m)}$. 
In the quadtree case, $i^{(m,m+1)}=\{(i,0),(i,1),(i,2),(i,3)\}$, where $(i,j)$ is the concatenation of $i$ and $0\leq j\leq 3$. The \emph{reduce} map can be defined as 
$$
\vq_{i}^{(m)} = \mathcal{R}^{(m)}(\{\vq_{j}^{(m+1)} \}_{ j\in i^{(m,m+1)}}),
$$ 
which maps the $(m+1)$-th level tokens with indices in $i^{(m,m+1)}$ to the $m$-th level token $i$. We implement $\mathcal{R}^{(m)}$ as a linear layer, namely, 
$$\vq_i^{(m)} =\mR_0^{(m)}\vq_{(i,0)}^{(m+1) }+\mR_1^{(m)}\vq_{(i,1)}^{(m+1) }+\mR_2^{(m)}\vq_{(i,2)}^{(m+1) }+\mR_3^{(m)}\vq_{(i,3)}^{(m+1) },$$
where $\mR_{0}^{(m)}, \mR_{1}^{(m)}, \mR_{2}^{(m)}, \mR_{3}^{(m)} \in \mathbb{R}^{\mathcal{C}^{(m-1)}\times \mathcal{C}^{(m)}}$ are  matrices. The reduce operation is applied to $\boldsymbol{q}_{i}^{(m)}$, $\boldsymbol{k}_{i}^{(m)}$, and $\boldsymbol{v}_{i}^{(m)}$ for any $i\in \mathcal{I}^{(m)}$, and $m=r-1,\cdots,1$. This step corresponds to the downwards arrow in Figure \ref{fig:vcycle}. 


\subsubsection{Multilevel Local Attention}
Instead of using global attention aggregation as in \eqref{eqn:vanilla_attn}, we utilize a local aggregation formula in each single level.  The local aggregation at the $m$-th level $\textbf{atten}^{(m)}_{\textrm{loc}}$ is written using the nested $\boldsymbol{q}_i^{(m)}, \boldsymbol{k}_j^{(m)}, \boldsymbol{v}_j^{(m)}$ for $m=r,\ldots,1$ as follows: for $i \in \mathcal{I}^{(m)},$
\begin{equation}
 \textbf{atten}^{(m)}_{\textrm{loc}}: {\vh}_i^{(m)   } =  \sum_{j\in \mathcal{N}^{(m)}(i) \cup i} \mathcal{G}(\boldsymbol{q}_i^{(m) },     \boldsymbol{k}_{j}^{(m) }) \boldsymbol{v}_j^{(m) },
    \label{eqn:atten_loc_m}
\end{equation}
where $\mathcal{N}^{(m)}(i)$ denotes the set of neighbors of $i \in \mathcal{I}^{(m)}$ in the $m$-th level. { We define $\mathcal{N}^{(m)}(i)$ as the set of tokens within a specific window centered on the $i$-th token with a fixed window size for each level. This configuration ensures that attention aggregation mirrors the localized scope characteristic of convolution operations.}


\subsubsection{  Decompose\emph{\&}mix  Operation \emph{using the quadtree hierarchy}}

The \emph{decompose} operation reverses the reduce operation from level $1$ to level $r-1$. The decompose operator $\mathcal{D}^{(m)}: \vh_{i}^{(m)} \mapsto \{ \tilde{\vh}_{j}^{(m+1)}\}_{ j\in i^{(m,m+1)}}$, maps the $m$-th level feature $ {\vh}_{i}^{(m) }$ with index $i$ and $1\leq m\leq r-1$ to $(m+1)$-th level tokens associated to its child set $ i^{(m,m+1)}$. The presentation above provides an equivalent matrix form of $\mathcal{R}^{(m)}$ and $\mathcal{D}^{(m)}$ from fine to coarse levels.
 $ \tilde{\vh}_{i}^{(m+1)}$ is further aggregated to $ { \vh}_{i}^{(m+1)  }$ in the \emph{mix} operation such that $  \vh_{i}^{(m+1)  } +=   \tilde{\vh}_{i}^{(m+1)}$ for $i\in \mathcal{I}^{(m+1)}$. In the current implementation, we use a simple linear layer such that $\tilde{\vh}_{(i,s)}^{(m+1)}=\mD_s^{(m),T}  \vh_i^{(m)  }$, for $s=0,1,2,3$, with parameter matrices  $\mD_s^{(m)}\in \mathbb{R}  ^{\mathcal{C}^{(m)}\times \mathcal{C}^{(m+1)}}$. 


At this point, we can summarize the hierarchically nested attention algorithm as follows.
\begin{algorithm}[H]
    \caption{Hierarchically Nested Attention}
    \label{alg:Hierarchical Attention}
    \begin{algorithmic}   
    \STATE{\textbf{Input}: $\mathcal{I}^{(r)}$, $\vf_i^{(r) }$ for $i\in \mathcal{I}^{(r)}$.}\\  
    \textbf{STEP 0:} Compute $\boldsymbol{q}_i^{(r) }$, $\boldsymbol{k}_{i}^{(r) }$, $\boldsymbol{v}_i^{(r) }$ for $i\in \mathcal{I}^{(r)}$. \\
    \textbf{STEP 1: For} $m=r-1,\cdots,1$, \textbf{Do} the reduce operations $\boldsymbol{q}_{i}^{(m)  } = \mathcal{R}^{(m)}(\{\boldsymbol{q}_{j}^{(m+1)  }\}_{ j\in i^{(m,m+1)}} )$ and also for $\boldsymbol{k}_{i}^{(m)  }$ and $\boldsymbol{v}_{i}^{(m)  }$, for any $i\in \mathcal{I}^{(m)}$. \\
    \textbf{STEP 2: For} $m=r,\cdots,1$, \textbf{Do} the local aggregation by \eqref{eqn:atten_loc_m} to compute ${\vh}_{i}^{(m)  }, m=1,...,r,$  for any $i\in \mathcal{I}^{(m)}$.\\
    \textbf{STEP 3: For} $m=1,\cdots,r-1$, \textbf{Do} the decompose operations $\{\tilde{\vh}_{j}^{(m+1)}\}_{ j\in i^{(m,m+1)}} = \mathcal{D}^{(m)}( \vh_{i}^{(m)   })$, for any $i \in \mathcal{I}^{(m)}$; then $ \vh_{i}^{(m+1)   }+=\tilde{\vh}_{i}^{(m+1)}$,  for any $i \in \mathcal{I}^{(m+1)}$.\\
    \textbf{Output}: $\vh_i^{(r)  }$ for any $i\in \mathcal{I}^{(r)}$.
    \end{algorithmic}
\end{algorithm}


\subsubsection{Hierarchical Matrix Perspective}
\label{sec:Hmatrix}

The hierarchically nested attention in Algorithm \ref{alg:Hierarchical Attention} resembles the celebrated hierarchical matrix method \cite{Hackbusch2015}, in particular, the $\mathscr{H}^2$ matrix from the perspective of matrix operations. In the following, we take the one-dimensional binary tree-like hierarchical discretization shown in Figure \ref{fig:binary tree} as an example to illustrate the reduce operation, decompose operation, and multilevel token aggregation in \textbf{STEP 0-4} of Algorithm \ref{alg:Hierarchical Attention} using matrix representations.

\paragraph{\textbf{STEP 0}}
Given the input features $\vf^{(r) }$, compute the queries $\boldsymbol{q}_i^{(r) }$, keys $\boldsymbol{k}_{j}^{(r) }$, and values $\boldsymbol{v}_i^{(r) }$ for $j\in \mathcal{I}^{(r)}$.

Starting from the finest level features $\vf_i^{(r) },i \in \mathcal{I}^{(r)}$, the queries
$\boldsymbol{q}^{(r) }$ can be obtained by 
$$
\left[\begin{array}{c}
     \vdots  \\
     \boldsymbol{q}_i^{(r) }\\
     \vdots
\end{array}\right] = 
\underbrace{\left[
\begin{array}{llll}
\mW^{Q,(r)} & & \\
& \mW^{Q,(r)} & & \\
& & \ddots & \\
& & & \mW^{Q,(r)}
\end{array}
\right]}_{|\mathcal{I}^{(r)}|}
\left.\left[\begin{array}{c}
     \vdots  \\
     \vf_i^{(r) }\\
     \vdots
\end{array}\right]\right\}|\mathcal{I}^{(r)}|,
$$
and for the keys $\boldsymbol{k}^{(r) }$ and values $\boldsymbol{v}^{(r)}$, similar procedures follow. 

\paragraph{\textbf{STEP 1}}
\textbf{For} $m=r-1:1$, \textbf{Do} the reduce operations $\boldsymbol{q}_{i}^{(m) } = \mathcal{R}^{(m)}(\{\boldsymbol{q}_{j}^{(m+1) }\}_{ j\in i^{(m,m+1)}} )$ and also for $\boldsymbol{k}_{i}^{(m) }$ and $\boldsymbol{v}_{i}^{(m) }$, for any $i\in \mathcal{I}^{(m)}$. \\
If $\mathcal{R}^{(m)}$ is linear, the reduce operations correspond to $
\left[\begin{array}{c}
     \vdots  \\
     \boldsymbol{q}_i^{(m) }\\
     \vdots
\end{array}\right] = {\tR^{(m) }} \left[\begin{array}{c}
     \vdots  \\
     \boldsymbol{q}_i^{(m+1) }\\
     \vdots
\end{array}\right] 
$.
In matrix form, the reduce operation is given by multiplying with
$$
{\tR^{(m)}}:=\underbrace{
\left[\begin{array}{llllllll }
\mR_{0}^{(m)} & \mR_{1}^{(m)}  &  &   &  &   &  &     \\
 &   &    \mR_{0}^{(m)} & \mR_{1}^{(m)}   \\
  &  &  &  &\ddots &\ddots &\\
    &    &   &  &  &  &\mR_{0}^{(m)} & \mR_{1}^{(m)} 
\end{array}\right]
}_{|\mathcal{I}^{(m+1)}|} 
\left.\vphantom{\begin{array}{c}
       \cr
       \\[1.2ex]
       \cr
       \\[1.2ex]
\end{array}} \right\} |\mathcal{I}^{(m)}|,
$$
and $\mR_{0}^{(m)}, \mR_{1}^{(m)}\in \mathbb{R}^{\mathcal{C}{(m-1)}\times \mathcal{C}{(m)}}$ are matrices parametrized by linear layers. In practice, queries, keys, and values use different $\mR_{0}^{(m)}, \mR_{1}^{(m)}$ to enhance the expressivity. In general, these operators $\mathcal{R}^{(m)}$ are not limited to linear operators. The composition of nonlinear activation functions would help increase the expressivity. The nested learnable operators $\mathcal{R}^{(m)}$ also induce the channel mixing and is equivalent to a structured parameterization of $\mW^{Q}, \mW^{V}, \mW^{K}$ matrices for the coarse level tokens, in the sense that, inductively,

\begin{equation}
\begin{aligned}
\left[\begin{array}{c}
     \vdots  \\
     \boldsymbol{q}_i^{(m) }\\
     \vdots
\end{array}\right] & ={\tR^{(m)}} \cdots {\tR^{(r-1)}} \left[\begin{array}{c}
     \vdots  \\
     \boldsymbol{q}_i^{(r) }\\
     \vdots
\end{array}\right]
\\
& = {\tR^{(m)}} \cdots {\tR^{(r-1)}} 
\left[
\begin{array}{llll}
\mW^{Q,(r)} & & \\
& \mW^{Q,(r)} & & \\
& & \ddots & \\
& & & \mW^{Q,(r)}
\end{array}
\right]
 \left[\begin{array}{c}
     \vdots  \\
     \boldsymbol{f}_i^{(r) }\\
     \vdots
\end{array}\right].
\end{aligned}
\end{equation}

\begin{figure}[htb]
    \centering
    \includegraphics[width=0.8\textwidth]{  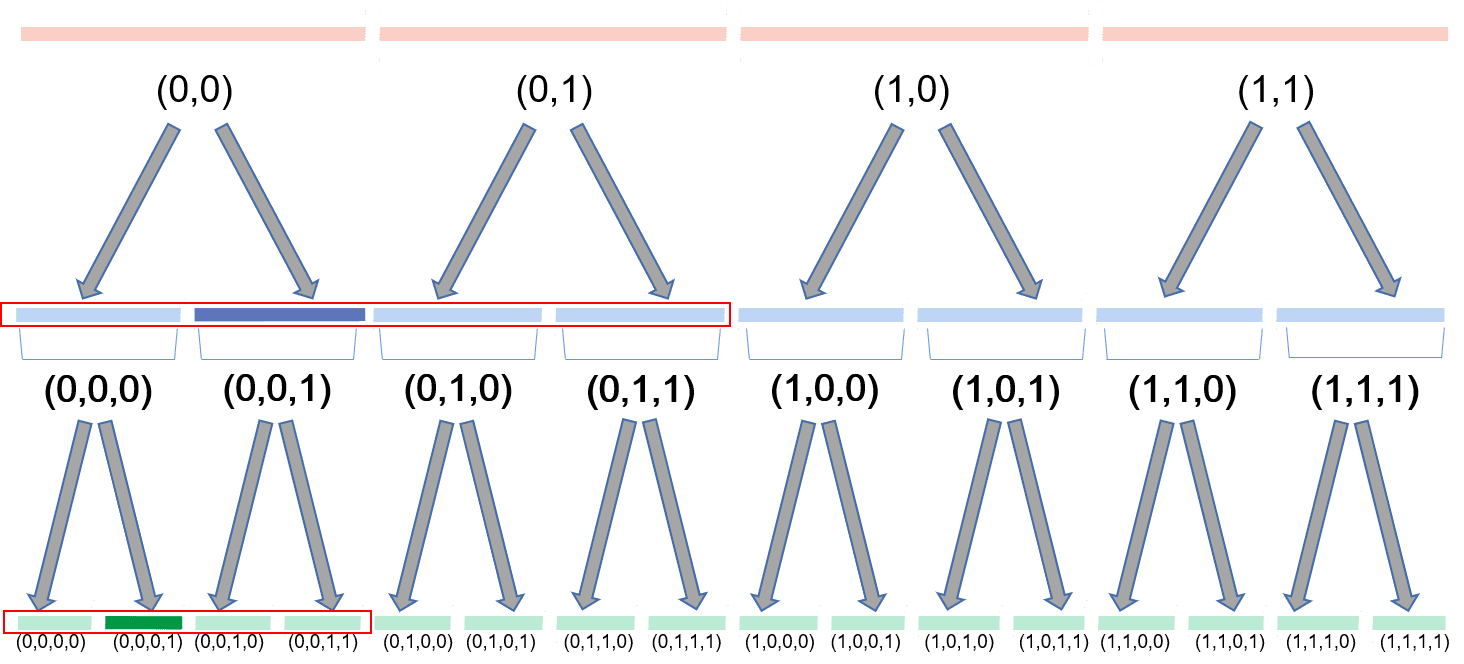}
    \caption{Hierarchical discretization of 1D domain.  The coarsest level partition is plotted as the top four segments in pink. The segment $(0,0)$ is further partitioned into two child segments $(0,0,0)$ and $(0,0, 1)$. During the reducing process, the computation proceeds from bottom to top to obtain coarser-level representations. For example, the $(0,0)$ representations are obtained by applying learnable reduce operations $\mathcal{R}^{(2)}$ and $\mathcal{R}^{(1)}$ on $(0,0,0)$ and $(0,0,1)$ respectively. When generating the high-resolution representations, the computation proceeds from top to bottom by applying learnable decomposition operations $\mathcal{D}^{(1)}$ and $\mathcal{D}^{(2)}.$ The red frames show examples of attention windows at each level.}
    \label{fig:binary tree}
\end{figure}

\paragraph{\textbf{STEP 2}}
With the $m$-th level queries and keys, we can calculate the local attention matrix $\mG_{\rm loc}^{(m)}$ at the $m$-th level with 
$(\mG_{\rm loc}^{(m)})_{i,j}:=\exp(\boldsymbol{q}_i^{(m) } \cdot \boldsymbol{k}_{j}^{(m) })$ for $i\in \mathcal{N}^{(m)}(j)$, or $i\sim j$. 

    
\paragraph{\textbf{STEP 3}}
The decompose operations, opposite to the reduce operations, correspond to the transpose of the following matrix in the linear case,
$$
{\tD^{(m)}}:=\underbrace{
\left[\begin{array}{llllllll }
\mD_{0}^{(m)} & \mD_{1}^{(m)} &   &   &  &   &  &     \\
 &   &    \mD_{0}^{(m)} & \mD_{1}^{(m)}   \\
  &  &  &    &\ddots & \ddots& \\
   & & &  &  &  &\mD_{0}^{(m)} & \mD_{1}^{(m)}  
\end{array}\right]
}_{|\mathcal{I}^{(m+1)}|} 
\left.\vphantom{\begin{array}{c}
       \cr
       \\[1.2ex]
       \cr
       \\[1.2ex]
\end{array}} \right\} |\mathcal{I}^{(m)}|,
$$
The $m$-th level aggregation in Figure \ref{fig:vcycle} contributes to the final output ${\vf}^{(r)}$ in the form 
$$
\tD^{(r-1),\tT} \cdots \tD^{(m), \tT} \mG_{\rm loc}^{(m)} \tR^{(m)} \cdots \tR^{(r-1)} \left[\begin{array}{c}
     \vdots  \\
     \boldsymbol{v}_i^{(r) }\\
     \vdots
\end{array}\right].
$$
Eventually, aggregations at all $r$ levels in one V-cycle can be summed up as
\begin{equation}
    \left[\begin{array}{c}
     \vdots  \\
     \vh_i^{(r)   }\\
     \vdots
\end{array}\right] = \left( \sum_{m=1}^{r-1} (\tD^{(r-1),\tT} \cdots \tD^{(m), \tT} \mG_{\rm loc}^{(m)} \tR^{(m)} \cdots \tR^{(r-1)})  + \mG_{\rm loc}^{(r)} \right)\left[\begin{array}{c} 
     \vdots  \\
     \boldsymbol{v}_i^{(r) }\\
     \vdots
\end{array}\right].
\label{eqn: 3matrix}
\end{equation}
The hierarchical attention matrix  
$$
\mG_h:= \sum_{m=1}^{r-1} (\tD^{(r-1),\tT} \cdots \tD^{(m), \tT} \mG_{\rm loc}^{(m)} \tR^{(m)} \cdots \tR^{(r-1)})  + \mG_{\rm loc}^{(r)},
$$ 
in \eqref{eqn: 3matrix} resembles the three-level $\mathscr{H}^2$ matrix decomposition illustrated in Figure \ref{fig:Hmatrix} (see also \cite{Hackbusch2015} for a detailed description). The sparsity of $\mG_h$ lies in the fact that the attention matrix is only computed for pairs of tokens within the neighbor set. The $\mathscr{H}^2$ matrix-vector multiplication in \eqref{eqn: 3matrix} implies the $\mathcal{O}(N)$ complexity of Algorithm \ref{alg:Hierarchical Attention}.

\begin{figure}[htbp]
    \centering
    \includegraphics[width=0.8\textwidth]{  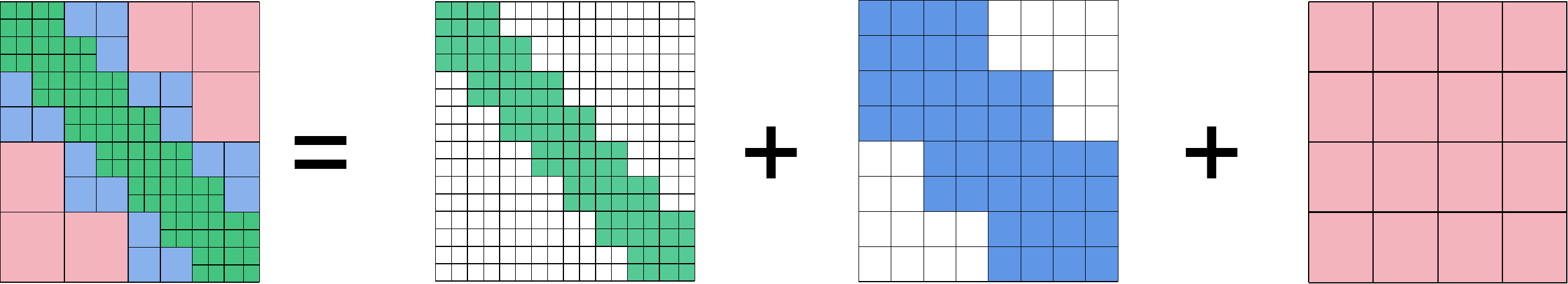}
    \caption{A demonstration of the decomposition of attention matrix into three levels of local attention matrix.}
    \label{fig:Hmatrix}
\end{figure}

Note that, the local attention matrix at level $\mathcal{I}^{(1)}$ (pink), level $\mathcal{I}^{(2)}$ (blue) and level $\mathcal{I}^{(3)}$ (green) are   $\mG_{loc}^{(1)}$, $\mG_{loc}^{(2)}$  and $\mG_{loc}^{(3)}$, respectively. However, when considering their contributions to the finest level, they are equivalent to the attention matrix $\tD^{(2),\tT} \tD^{(1), \tT} \mG_{loc}^{(1)} \tR^{(1)}  \tR^{(2)} \in \mathbb{R}^{\mathcal{I}^{(3)}} \times \mathbb{R}^{\mathcal{I}^{(3)}}$ (pink), $\tD^{(2),\tT}   \mG_{loc}^{(2)}   \tR^{(2)} \in \mathbb{R}^{\mathcal{I}^{(3)}} \times \mathbb{R}^{\mathcal{I}^{(3)}}$ (blue) and $\mG_{loc}^{(3)}$ (green), as demonstrated in Figure \ref{fig:Hmatrix}. Each pink block and blue block are actually low-rank sub-matrices with rank $\mathcal{C}^{(1)}$ and rank $\mathcal{C}^{(2)}$, respectively, by definition.

\subsubsection{Complexity}
\label{sec:compleixty}

We conclude the section by estimating the complexity of Algorithm \ref{alg:Hierarchical Attention}.
\begin{proposition} The reduce operation, multilevel local attention, and decomposition/mix operations together form a V-cycle for updating tokens, as illustrated in Figure \ref{fig:vcycle}. The cost of one V-cycle is $O(N)$ if $\mathcal{I}$ is a quadtree, as implemented in the paper. 
\end{proposition}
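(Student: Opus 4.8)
The plan is to count the arithmetic performed in STEP 0--3 of Algorithm~\ref{alg:Hierarchical Attention} level by level, and then sum a geometric series. Write $N := |\mathcal{I}^{(r)}|$ for the number of finest-level tokens. Since $\mathcal{I}$ is a quadtree, each $m$-th level token has exactly four children, so $|\mathcal{I}^{(m)}| = N/4^{\,r-m}$ for $m=1,\dots,r$, and in particular $\sum_{m=1}^{r}|\mathcal{I}^{(m)}| = N\sum_{k=0}^{r-1}4^{-k}\le \tfrac{4}{3}N$. Throughout I would treat the channel widths $\mathcal{C}^{(m)}$ and the per-level window cardinalities $|\mathcal{N}^{(m)}(i)|$ as constants independent of $N$; the window size being fixed per level is exactly the design choice made in the definition of $\mathcal{N}^{(m)}$, and the ``$V$-cycle'' structure of the proposition is just the observation that STEP~1 is a downward sweep over levels, STEP~2 is a per-level local computation, and STEP~3 is the reverse upward sweep.

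First I would handle STEP~0 and STEP~1 (the reduce branch) together with STEP~3 (decompose\&mix), since these are all composed of fixed-size linear maps. STEP~0 applies the three matrices $\mW^{Q,(r)},\mW^{K,(r)},\mW^{V,(r)}$ to each of the $N$ finest-level features, costing $O(N)$. In STEP~1, forming $\vq_i^{(m)}$ (and likewise $\vk_i^{(m)},\vv_i^{(m)}$) from its four children is a bounded number of matrix-vector products with $\mR_0^{(m)},\dots,\mR_3^{(m)}$, hence $O(1)$ work per token $i\in\mathcal{I}^{(m)}$; summing over $i$ and over $m=r-1,\dots,1$ gives $O\big(\sum_{m=1}^{r-1}|\mathcal{I}^{(m)}|\big)=O(N)$. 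STEP~3 is the transpose of this computation with the matrices $\mD_s^{(m)}$, plus one vector addition per token in the mix step, so by the identical bound it also costs $O(N)$.

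Next I would handle STEP~2, the multilevel local attention, which is the only place where the fixed-window assumption is essential. For $i\in\mathcal{I}^{(m)}$, evaluating $\vh_i^{(m)}$ via \eqref{eqn:atten_loc_m} requires computing $\mathcal{G}(\vq_i^{(m)},\vk_j^{(m)})$ and accumulating $\vv_j^{(m)}$ over $j\in\mathcal{N}^{(m)}(i)\cup\{i\}$; since this index set has bounded cardinality, each $\vh_i^{(m)}$ costs $O(1)$, so level $m$ contributes $O(|\mathcal{I}^{(m)}|)$ and the total over $m=r,\dots,1$ is again $O\big(\sum_{m=1}^{r}|\mathcal{I}^{(m)}|\big)=O(N)$. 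This is precisely the point where a naive global aggregation as in \eqref{eqn:vanilla_attn} would instead cost $\Theta(|\mathcal{I}^{(m)}|^{2})$ at each level; the $\mathscr{H}^2$-style localization is what restores linearity. Adding the three $O(N)$ contributions from the reduce, local-attention, and decompose\&mix stages yields the claimed $O(N)$ cost of one V-cycle.

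I do not expect a genuine obstacle here; the only thing to be careful about is the bookkeeping of constants, namely making explicit that the per-level window size and the channel dimensions do not grow with $N$, so that the geometric series $\sum_k 4^{-k}$ (rather than a sum with growing weights) controls the total. If one wished to expose the dependence on the architecture, the same computation gives a bound $O\big(N\cdot \max_m(\mathcal{C}^{(m)})^{2}\cdot \max_m|\mathcal{N}^{(m)}|\big)$; and it is worth noting that although a quadtree of $N$ leaves has depth $r=O(\log N)$, no $\log N$ factor appears in the final bound because the geometric decay of $|\mathcal{I}^{(m)}|$ already absorbs the sum over levels, leaving a clean $O(N)$.
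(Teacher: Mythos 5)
Your proposal is correct and follows essentially the same approach as the paper's proof: bound the per-level cost of reduce, local attention, and decompose by a constant times $|\mathcal{I}^{(m)}|$ (using that channel widths and window cardinalities are fixed), then sum the geometric series $\sum_m |\mathcal{I}^{(m)}| = N\sum_k 4^{-k} = O(N)$ over the quadtree levels. Your additional remarks — the explicit bound $\tfrac{4}{3}N$, the observation that the geometric decay absorbs the $O(\log N)$ depth, and the explicit constant $O(N\cdot\max_m(\mathcal{C}^{(m)})^2\cdot\max_m|\mathcal{N}^{(m)}|)$ — are consistent refinements of the paper's argument rather than a different route.
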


\begin{proof}
For each level $m$, the cost to compute \eqref{eqn:atten_loc_m} is $c(|\mathcal{I}^{(m)}|\mathcal{C}^{(m)})$ since for each $i \in \mathcal{I}^{(m)}$ the cardinality of the neighbour set $\mathcal{N}^{(m)}(i) $ is bounded by a constant $c$. The reduce operation $\vf_{i}^{(k-1)} = \mathcal{R}^{(k-1)}(\{\vf_{j}^{(m)}\}_{ j\in i^{(k-1,k)}} )$ costs at most $|\mathcal{I}^{(m)}|\mathcal{C}^{(m)}\mathcal{C}^{(k-1)}$ flops and so does the decompose operation at the same level. Therefore, for each level, the operation cost is $c(|\mathcal{I}^{(m)}|\mathcal{C}^{(m)})+ 2|\mathcal{I}^{(m)}|\mathcal{C}^{(m)}\mathcal{C}^{(m-1)}$. When $\mathcal{I}$ is a quadtree, $\mathcal{I}^{(r)} = N, \, \mathcal{I}^{(r-1)} = N/4, \cdots, \mathcal{I}^{(1)} = 4$, therefore the total computational cost $\sim \mathcal{O}(N)$.  
\end{proof}


\subsection{Overall Architecture}
The overall neural network architecture uses the standard Transformer \cite{dosovitskiy2020image} architecture for computer vision tasks, and the HANO attention is a drop-in replacement of the attention mechanism therein. The input $\va$ is first embedded into $n \times n$ tokens represented as a tensor of size $n \times n \times \mathcal{C}^{(r)}$ using patch embedding, for a dataset with resolution $n_{f}\times n_f$, such as in the multiscale elliptic equation benchmark. These tokens are then processed by a multi-level hierarchically nested attention, as described in Section \ref{sec:methods}, resulting in hidden features ${\vh_i^{(r)}, i \in \mathcal{I}^{(r)}}$. Finally, a decoder maps the hidden features to the solution $\vu$. Different decoders can be employed depending on prior knowledge of the PDE model. For example, \cite{lu2021learning} uses a simple feedforward neural network (FFN) to learn a ``basis'' set, \cite{bhattacharya2021model} employs a data-driven SVD-based decoder, and in our work, the compose and mix operations function as the decoder. 

In this paper, we choose $r=5$ as the depth of the HANO, window size of $3\times 3$ for the definition of the neighborhood $\mathcal{N}^{(\cdot)}(\cdot)$ in \eqref{eqn:atten_loc_m}, GELU as the activation function, and a CNN-based patch embedding module to transfer the input data into features/tokens.

For a dataset with resolution $n_{f}\times n_f$, such as in the multiscale elliptic equation benchmark \ref{sec:experiments:darcy}, the input feature $\vf^{(5)}$ is represented as a tensor of size $n \times n \times C$ via patch embedding. The self-attention is first computed within a local window on level $5$. Then the reduce layer concatenates the features of each group of $2 \times 2$ neighboring tokens and applies a linear transformation on the $4C$-dimensional concatenated features on $\frac{n}{2} \times \frac{n}{2}$ level 2 tokens, to obtain level 2 features $\vf^{(2)}$ as a tensor of the size $\frac{n}{2} \times \frac{n}{2} \times 2C$. The procedure is repeated from level 2 to level 1 with $\vf^{(1)}$ of size $\frac{n}{4} \times \frac{n}{4} \times 4C$. 

For the decompose operations, starting at level 1, a linear layer is applied to transform the $4C$-dimensional features $\vf^{(1)}$ into $8C$-dimensional features.
Each level 1 token with $8C$-dimensional features is decomposed into four level 2 tokens with $2C$-dimensional features.
These four level 2 tokens are added to the existing level 2 feature $\vf^{(2)}$ with output size of $\frac{n}{2} \times \frac{n}{2} \times 2C$.
The decomposition procedure is repeated from level 2 to level 3.
The output of level 3 is $\vf^{(3)}$, which has a size of $n \times n \times C$. We call the above procedures a cycle and we repeat $k$ cycles by the same set up with layer normalizations between cycles. 

The detailed configuration for HANO, which may consists of different levels and feature dimensions for each task, is presented in Table \ref{tab:Hyperparameters configuration}.


\begin{table}[H]
    \centering
    \begin{tabular}{cc}
    \hline \textsc{Module} & \textsc{Hyperparameters}   \\
    \hline \textsc{Patch embedding}   & \textsc{patch size}: 4, \textsc{padding}: 0 \\
    \hline \textsc{Hierarchical Attention} & $\begin{array}{c}
      \textsc{number of levels: }  5     \\
      \textsc{down sampling ratio} {|\mathcal{I}^{m+1}|}/{|\mathcal{I}^{m}|}: 4 \\
        \textsc{feature dimension at each level: }   
 \{32, 32, 32, 32, 32\} \\
        \textsc{window size at each level: } \{3, 3, 3, 3, 3\}\\
        \textsc{LayerNorm position: after attention} \\
        \textsc{number of cycles: 2}
    \end{array}$   \\
    \hline    
    \end{tabular}
    \caption{Hyperparameters configurations}
    \label{tab:Hyperparameters configuration}
\end{table}

\subsection{Comparison with Existing Multilevel Transformers}
 In vision transformers like \cite{liu2021swin, zhang2021aggregating} with a multilevel architecture, attentions are performed at each level separately, resulting in no multilevel attention-based aggregation. This may lead to the loss of fine-scale information in the coarsening process, which is not ideal for learning multiscale operators where fine-scale features are crucial. The following components in the HANO approach we proposed could potentially address this issue: 
 \begin{enumerate}[label=(\arabic*)]
     \item Attention-based local aggregations at each level, followed by summation of features from all levels to form the updated fine-scale features;
     \item The reduce/local aggregation/decompose/mix operations, inspired by the $\mathcal{H}^2$ hierarchical matrix method, enable the recovery of fine details with a linear cost; 
     \item Nested computation of features at all levels, with simultaneous parameterization of the learnable matrices $\mW^{Q}, \mW^{V}, \mW^{K}$ in a nested manner. Those components highlight the novelty of our method. 
 \end{enumerate}
 
Meanwhile, the nested token calculation approach also differs from existing multilevel vision transformers \cite{liu2021swin, zhang2021aggregating}, as we perform the reduce operation before attention aggregation, resulting in nested $\vq, \vk, \vv$ tokens. Additionally, our approach differs from UNet \cite{ronneberger2015u}, which utilizes a maxpooling for the reduce operation. For a numerical ablation study in which UNet and SWIN have the same general architecture, but different ways to aggregate features in each level, please refer to Table \ref{tab:bench}. 

Our attention matrix has a global interaction range but features low-rank off-diagonal blocks at each level, as shown in Section \ref{sec:Hmatrix}. Note that the overall attention matrix itself is not necessarily low-rank, distinguishing it from efficient attention models using kernel tricks or low-rank projections~\cite{choromanski2020rethinking,linformer:2020,peng2021random,nguyen2021fmmformer, xiong2021nystromformer}.

\section{Experiments}
\label{sec:experiments}

In this section, we tested HANO's evaluation accuracy and efficiency compared to other state-of-the-art neural operators and other Transformers in several standard operator learning benchmarks. In Section \ref{sec:experiments:darcy}, a new operator learning benchmark is created for solving multiscale elliptic PDEs, and common neural operators are tested. HANO demonstrates higher accuracy and robustness for coefficients with different degrees of roughness/multiscale features. 
In Section \ref{sec:experiments:navierstokes}, HANO is tested in the Navier-Stokes equation benchmark problem with a high Reynolds number. In Section \ref{sec:experiments:helmholtz}, HANO is tested in a benchmark for the Helmholtz equation.

\subsection{Data generation for porous media benchmark}
\label{sec:experiments:darcy}
We apply the HANO model to learn the mapping from coefficient functions to solution operators for multiscale elliptic equations. We use the porous media benchmark with a two-phase coefficient produced from a log-normal random field following e.g. \cite{guadagnini1999nonlocal,gittelson2010stochastic}, and popularized by \cite{nelsen2021random,li2020fourier} as a standard task in operator learning. The model equation in divergence form writes
\begin{equation}
    \left\{
	\begin{aligned}
		-\nabla \cdot(a   \nabla u  ) &=f  & &\text{ in }  D \\
		u  &=0 &  &\text{ on } \partial D
	\end{aligned}
    \right.
\label{eqn:darcy}
\end{equation}
where the coefficient $0<a_{\min}\leq a:=a(x) \leq a_{\max}, \forall x \in D$, and the forcing term $f\in L^2(D;\mathbb{R})$. By the Lax-Milgram lemma, the coefficient to solution map $\mathcal{S}:L^\infty (D;\mathbb{R}_+) \rightarrow H_0^1(D;\mathbb{R})$, $u\mapsto \mathcal{S}(a)$ is well-defined.

 We also include experiments for multiscale trigonometric coefficients with higher contrast. The newly generated benchmark using rough or multiscale coefficient $a(x)$ test the capacity of neural operators to capture fast oscillation (e.g. the diffusion coefficient becomes $a_{\epsilon}(x) = a(x/\varepsilon)$ with $\varepsilon\ll 1$), higher contrast ratio $a_{\max}/a_{\min}$, and even a continuum of non-separable scales. See Section \ref{sec:additonaltests:multiscaletrignometric} for details. Results for three benchmarks are summarized in Table \ref{tab:bench}. 

\subsubsection{Two-Phase Coefficient}
\label{sec:datageneration}

The two-phase coefficients $\{a\}$ and approximations to solutions $\{u\}$ in Section \ref{sec:experiments:darcy} are generated according to \url{https://github.com/zongyi-li/fourier_neural_operator/tree/master/data_generation} as a standard benchmark.  The forcing term is fixed as $f(x)\equiv 1$ in $D$. The coefficients $a(x)$ are generated according to $a \sim \mu:=\psi_{\#} \mathcal{N}\left(0,(-\Delta+c I)^{-2}\right)$, where the covariance is inverting this elliptic operator with zero Neumann boundary conditions. The mapping $\psi: \mathbb{R} \rightarrow \mathbb{R}$ takes the value $a_{\max}$ on the positive part of the real line and $a_{\min}$ on the negative part. 
The push-forward is defined in a pointwise manner, thus $a$ takes the two values inside $D$ and jumps from one value to the other randomly with likelihood characterized by the covariance. Consequently, $a_{\max}$ and $a_{\min}$ can control the contrast of the coefficient. The parameter $c$ controls the ``roughness'' of the coefficient; a larger $c$ results in a coefficient with rougher two-phase interfaces, as shown in Figure \ref{fig:smooth&rough}. Solutions $u$ are obtained by using a second-order finite difference scheme on a staggered grid with respect to $a$.

In \cite{li2020fourier} and all subsequent work benchmarking this problem for operator learning, the coefficient is determined using $a_{\max} = 12$, $a_{\min} = 3$, and $c=9$, which results in a relative simply topology of the interface. In this case, the solutions are also relatively smooth (which is referred to as ``\emph{Darcy smooth}''). To show the architectural advantage of HANO, we adjust the parameters to increase the likelihood of the random jumps of the coefficients, which results in much more complicated topology of the interfaces, and solutions generated show more ``roughness'' (which is referred to as ``\emph{Darcy rough}''). See Figure \ref{fig:smooth&rough} for an example.

\subsubsection{Multiscale trigonometric coefficient}
\label{sec:additonaltests:multiscaletrignometric}

We also consider \eqref{eqn:darcy} with multiscale trigonometric coefficient adapted from \cite{OwhadiMultigrid:2017}, as one of the multiscale elliptic equation benchmarks. The domain $D$ is $(-1,1)^2$, and the coefficient $a(x)$ is defined as 
$$
a(x) = \prod \limits_{k=1}^6  (1+\frac{1}{2} \cos(a_k \pi (x_1+x_2)))(1+\frac{1}{2} \sin(a_k \pi (x_2-3x_1))),
$$ 
where $a_k$ is uniformly distributed between $2^{k-1}$ and $1.5\times 2^{k-1}$ for each $k$, and the forcing term is fixed as $f(x)\equiv 1$. The reference solutions are obtained using the linear Lagrange finite element methods on uniform triangulation cut from a $1023 \times 1023$ Cartesian grid. Datasets of lower resolution are created by downsampling the higher resolution dataset using bilinear interpolation. The experiment results for the multiscale trigonometric case with different resolutions are shown in Table  \ref{tab:bench}. HANO obtains the best relative $L^2$ error compared to other neural operators. See Figures \ref{fig:gamblet_fig} and \ref{fig:gamblet_fig2} for illustrations of the coefficient and comparison of the solutions/derivatives at the slice $x=0$.

\subsection{Training Setup}
\label{sec:experiments:setup}

\def\ba{\boldsymbol{a}}
\def\bu{\boldsymbol{u}}

We consider pairs of functions $\{(a_j,u_j)\}_{j=1}^N$, where $a_j$ is drawn from a probability measure specified in Sections \ref{sec:datageneration} and \ref{sec:additonaltests:multiscaletrignometric}, and $u_j = \mathcal{S}(a_j)$. During training and evaluation, $a_j$ and $u_j$ are evaluated pointwisely on a uniform 2D grid $\mathsf{G}^2:=\{(x_1,x_2)=(ih,jh) \mid i,j=0,\dots,n-1\}$ as matrices $\va_j$ and $\vu_j$. We generate the hierarchical index tree $\mathcal{I}$ using a quadtree representation of nodes with depth $r$, where the finest level objects are pixels or patches aggregated by pixels. 

We apply the ADAM optimizer with a maximum learning rate $10^{-3}$, weight decay $10^{-4}$, and a 1-cycle scheduler from \cite{smith2019super}. We choose batch size 8 for experiments in Sections \ref{sec:experiments:darcy} and \ref{sec:experiments:navierstokes}, and batch size 4 for experiments in Sections \ref{sec:additonaltests:multiscaletrignometric} and \ref{sec:experiments:helmholtz}.  

For the Darcy rough case, we use a train-validation-test split of 1280, 112, and 112, respectively, with a max of 500 epochs. For Darcy smooth and multiscale trigonometric cases, we use a split of 1000, 100, and 100, respectively, with a max of 500 epochs for the Darcy smooth case and 300 for the multiscale trigonometric case. 

Baseline models are taken from the publicly available official implementations, and changes are detailed in each subsection if there is any. All experiments are run on an NVIDIA A100 GPU.

\paragraph{Empirical $H^1$ Loss Function}
For multiscale problems, 
we adopt an $H^1$ loss function instead of the conventional $L^2$ loss, which places greater emphasis on high-frequency components. Empirically, we observe that the model's training is more efficient and the generalization is more robust than those without. First, the empirical $L^2$ loss function is defined as 
$$
\mathcal{L}^{L}(\left\{(\ba_j, \bu_j)\right\}_{j=1}^N; \theta):=  \frac{1}{N}\sum_{i=1}^N \|\bu_j-\mathcal{N}(\ba_j; \theta)\|_{l^2}/\|\bu_j\|_{l^2},
$$ where $\|\cdot\|_{l^2}$ is the canonical $l^2$ vector norm. The normalized discrete Fourier transform (DFT) coefficients of $f$ are given by 
\begin{equation}
\label{eq:dft}
    \mathcal{F}(f)(\xi):= \frac{1}{\sqrt{n}}\sum_{x\in \mathsf{G}^2}  f(x) e^{-2 i \pi  x \cdot \xi },
\quad \xi \in \mathbb{Z}_n^{2}:=\left\{\xi = (\xi_1, \xi_2) \in \mathbb{Z}^2 \mid  -n/2+1\leqslant \xi_{j} \leqslant n/2, j=1, 2\right\}
\end{equation}
The empirical $H^1$ loss function is thus given by,
\begin{equation}
\label{eq:loss-h1-frequency}
        \mathcal{L}^{H}(\left\{(\ba_j, \bu_j)\right\}_{j=1}^N; \theta):=\frac{1}{N}\sum_{i} \|\vu_j-\mathcal{N}(\ba_j; \theta)\|_{h}/\|\bu_j\|_{h},  
\end{equation}
where $\|\vu\|_{h} := \sqrt{\sum_{\xi\in \mathbb{Z}_n^{2}} |\xi|^2(\mathcal{F}(\vu)(\xi))^2}$. $\mathcal{L}^{H}$ can be viewed as a weighted $\mathcal{L}^{L}$ loss using $|\xi|^2$ weights to balance the error in low- and high-frequency components. Note that the frequency domain representation of the discrete $H^1$ norm is used following the practices in e.g. \cite{ronen2019convergence,Matthew2020Fourier}. Here the discrete $H^1$ norm approximated using difference quotient in the physical space can also be employed, however, from the numerical quadrature point of view, by Parseval identity \eqref{eq:loss-h1-frequency} is exact with no quadrature error.





\begin{figure}[htbp]
    \centering
    \subfigure[\scriptsize coefficient]{\includegraphics[width=0.18\textwidth,height=0.25\textwidth]{  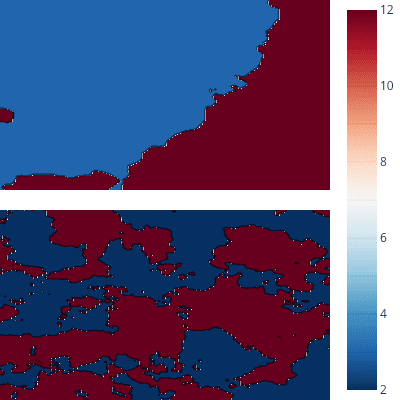}}
    \subfigure[\scriptsize reference]{\includegraphics[width=0.18\textwidth, height=0.25\textwidth]{  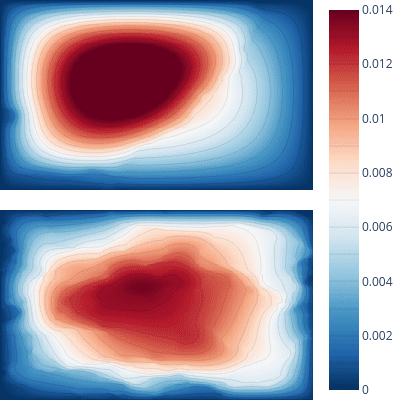}}
    \subfigure[\scriptsize HANO prediction]{\includegraphics[width=0.18\textwidth, height=0.25\textwidth]{  predict_sol_B.png}}
    \subfigure[\scriptsize abs. error of HANO]
    {\includegraphics[width=0.18\textwidth, height=0.25\textwidth]{    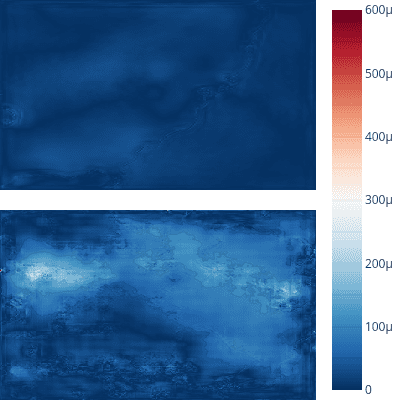}}
    \subfigure[\scriptsize abs. error of FNO2D]
    {\includegraphics[width=0.18\textwidth, height=0.25\textwidth]{  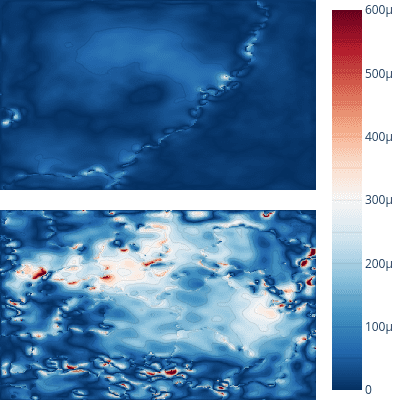}}

    \caption{  \textbf{Top:} (a) smooth coefficient in \cite{li2020fourier}, with $a_{\max}=12,$ $a_{\min}=3$, $c=9$, (b), reference solution, (c) HANO solution, (d) HANO, absolute (abs.) error, (e) FNO2D, abs. error; \textbf{Bottom:} (a) rough coefficients with $a_{\max}=12,$ $a_{\min}=2$, $c=20$, (b) reference solution, (c) HANO solution, (d) HANO, abs. error, (e) FNO2D, abs. error, the maximal error of FNO2D is around $900\mu=9\mathrm{e}{-4}$.  }
    \label{fig:smooth&rough}
\end{figure}

\begin{table*}[ht]
    \caption{The baseline methods are implemented with their official implementation if publicly available.  Performance are measured with relative $L^2$ errors ($\times 10^{-2}$) and relative $H^1$ errors ($\times 10^{-2}$). For the Darcy rough case, we run each experiment 3 times to calculate the mean and the standard deviation (after $\pm$) of relative $L^2$ and relative $H^1$ errors. All experiments use a fixed train-val-test split setup, see Section \ref{sec:experiments:setup} for details.  
}
    \label{tab:bench}
    \begin{center}
    \begin{tabular}{llrrrrrrr}
    \toprule
    && & \multicolumn{2}{c}{\textbf{Darcy smooth}} & \multicolumn{2}{c}{\textbf{Darcy rough}} & \multicolumn{2}{c}{\textbf{Multiscale}}\\ \cmidrule(lr){4-5}\cmidrule(lr){6-7}\cmidrule(lr){8-9}
    &\textbf{Model} & Runtime (s) &  $L^2$ & $H^1$ & $L^2$ & $H^1$ & $L^2$ & $H^1$ \\ 
    \midrule
    \multirow{4}{*}{\rotatebox[origin=c]{90}{ }} 
     &\textsc{FNO2D} 
    & $\textbf{7.278}$ 
    & $0.706$ & $3.131$
    & $1.782$ {\scriptsize $\pm0.021$}  & $9.318$ {\scriptsize $\pm0.088$}
    & $1.949$& $14.535$ \\
     &\textsc{FNO2D $H^1$} 
    & $7.391$ 
    & $0.684$ & $2.583$
    & $1.613$ {\scriptsize $\pm0.010$} & $7.516$ {\scriptsize $\pm0.049$}
    & $1.800$ & $9.619$  \\
    &\textsc{UNet} 
    &   $9.127$ 
    & $2.169$  & $4.885$ 
    &  $3.591$ {\scriptsize  $\pm0.127$} &  $6.479$ {\scriptsize $\pm0.311$}
    & $1.425$ & $5.012$  \\
    &\textsc{U-NO} 
    &   $11.259$
    & $0.678$ &   $2.580$
    &  $1.185$ {\scriptsize $\pm0.005 $} &  $5.695$ {\scriptsize $\pm0.005$}
    & $1.350$ &  $8.577$  \\
    &\textsc{U-NO $H^1$} 
    &   $11.428$
    & $0.492$ &   $1.276$
    &  $1.023$  {\scriptsize $\pm0.013$}  & $3.784$ {\scriptsize $\pm0.016$}
    & $1.187$ & $5.380$  \\
    &\textsc{MWT} & 
    $19.715$ & --- & --- 
    & $1.138$ {\scriptsize $\pm0.010$}& $4.107$  {\scriptsize $\pm0.008$}
    & $1.021$ & $7.245$ \\ 
    &\textsc{GT}
    & $38.219$ 
    & $0.945$ & $3.365$
    & $1.790$ {\scriptsize $\pm0.012$}& $6.269$ {\scriptsize $\pm0.418$}
    & $1.052$& $8.207$  \\ 
     &\textsc{SWIN} 
     & $41.417$ 
     &  --- & --- 
     & $1.622$ {\scriptsize $\pm0.047$}& $6.796$ {\scriptsize $\pm0.359$}
     & $1.489$ & $13.385$ \\
     &{  \textsc{HANO $L^2$ }} 
    &  $9.620$
    &  $0.490$  &  $1.311$  &   {$0.931$ {\scriptsize $\pm0.021$}} &  {$2.612$ {\scriptsize $\pm0.059$}}
    & $0.842$ & $4.842$\\
     &\textsc{HANO $H^1$} 
    &  $9.620$
    & \bm{$0.218$ } & \bm{$0.763$}  &   {\bm{$0.343$ }{\scriptsize $\pm0.006$}} &  {\bm{$1.846$}{\scriptsize $\pm0.023$}}
    & \bm{$0.580$} & \bm{$1.749$}\\ 
    
    \bottomrule\\[-2.5mm]
    \multicolumn{9}{l}{{\scriptsize --- MWT \cite{gupta2021multiwavelet} only supports resolution with powers of two.}}
    \end{tabular}
    \end{center}

\end{table*}

\subsection{Empirical Study on the Spectral Bias in Operator Learning}

We compare HANO and FNO in terms of prediction error dynamics across frequencies from epoch 0 to epoch 100 (end) in Figure \ref{fig:spectral_bias} for a comprehensive comparison. The subfigures (c,d) in Figure \ref{fig:spectral_bias} suggest that existing methods can learn low frequencies quickly but struggle with higher frequencies. At the end of the training, plenty of high-frequency components are still not well resolved as shown in Figure \ref{fig:gamblet_fig}(b,c,d) and Figure \ref{fig:gamblet_error_freqerror}. This phenomenon is often referred to as the \emph{spectral bias}, well-documented for training neural networks for conventional classification tasks, and here we observe it in operator learning tasks. On the contrary, HANO's error decays faster for higher frequencies and more uniformly overall. It also achieves lower testing errors. 
Experimentally, the ablation suggests that the hierarchical nested attention allows the model to capture finer-scale variations better. which also helps HANO outperform existing methods as is shown in Figure~\ref{fig:gamblet_fig}.
We also observe that, with the $H^1$ loss function, spectral bias is further mitigated, which applies to FNO-based variants as well.



To better illustrate the spectral bias of multiscale operator learning, we record the training dynamics in the frequency domain. Recall that, the spatial domain $D=[0,1]^2$ is discretized uniformly with $h=1/n$ to yield a Cartesian grid $\mathsf{G}^2$ in our experiments. 
For any $\xi \in \mathbb{Z}_n^{2}$, consider the normalized discrete Fourier transform (DFT) coefficients $\mathcal{F}(f)(\cdot)$ of $f$ in \eqref{eq:dft}, 
the mean absolute prediction error for a given frequency $\xi \in \mathbb{Z}_n^{2}$ is measured by 
$$
\begin{aligned}
    &\mathcal{E}^{\mathrm{train}}(\mathcal{N}; \xi):=
{ \frac{1}{N_\mathrm{train}}}\sum_{i=1}^{N_{\mathrm{train}}}|\mathcal{F}(\bu_i^{\mathrm{train}}-\mathcal{N}(\ba_i^{\mathrm{train}}))(\xi)|, \\
&\mathcal{E}^{\mathrm{test}}(\mathcal{N}; \xi):=
{ \frac{1}{N_\mathrm{test}}}\sum_{i=1}^{N_{\mathrm{test}}}|\mathcal{F}(\bu_i^{\mathrm{test}}-\mathcal{N}(\ba_i^{\mathrm{test}}))(\xi)|.
\end{aligned}
$$ 
where $\{\ba_i^{\mathrm{train}}, \bu_i^{\mathrm{train}}\}_{i=1}^{N_{\mathrm{train}}}$ and $\{\ba_i^{\mathrm{test}}, \bu_i^{\mathrm{test}}\}_{i=1}^{N_{\mathrm{test}}}$ are the training and testing datasets of Darcy rough task. Heuristically, for low frequencies $\xi$, $\mathcal{E}(\mathcal{N}; \xi)$ represents the capability of the neural network for predicting the ``global trend''. Conversely, for high frequencies $\xi$, $\mathcal{E}(\mathcal{N}; \xi)$ represents the capability for predicting variations on smaller scales. 
During training, we record $\mathcal{E}^{\mathrm{train}}(\mathcal{N}; \xi)$ and $\mathcal{E}^{\mathrm{test}}(\mathcal{N}; \xi)$ for each $\xi\in \mathbb{Z}_n^{2}$ at each epoch. 

From Figure \ref{fig:spectral_bias}, we conclude that existing methods struggle with learning higher frequencies. UNet and UNO mitigate this to some extent, likely due to their UNet-like multi-level structure. HANO's error pattern shows faster decay than others for higher frequencies and is more uniform overall. It also achieves lower testing errors. The mathematical heuristics of MWT, UNet, and UNO can be attributed to multigrid methods~\cite{XuZikatanov:2017} and wavelet-based multiresolution methods~\cite{Brewster1995,Beylkin1998}. However, these methods may have limitations for multiscale PDEs~\cite{Branets2009} because they apply instance-independent kernel integration regardless of the input data (or latent representations). In contrast, attention-based operations in the HANO architecture, which become more efficient when enabled by data-driven reduce/decompose operations, have the potential to address this limitation by adapting the kernel to a specific input instance.


\begin{figure}[H]
    \centering
    \subfigure[FNO ]{\includegraphics[width=0.19\textwidth]{  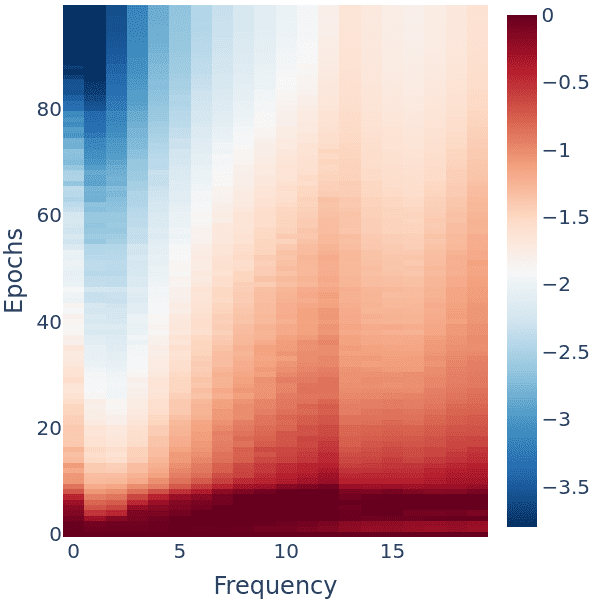}}
    \subfigure[FNO-$H^1$ ]{\includegraphics[width=0.19\textwidth]{  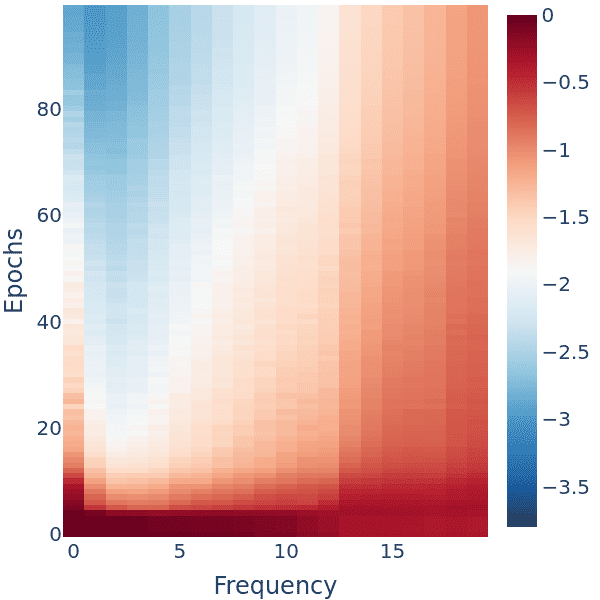}}
    \subfigure[MWT ]{\includegraphics[width=0.19\textwidth]{  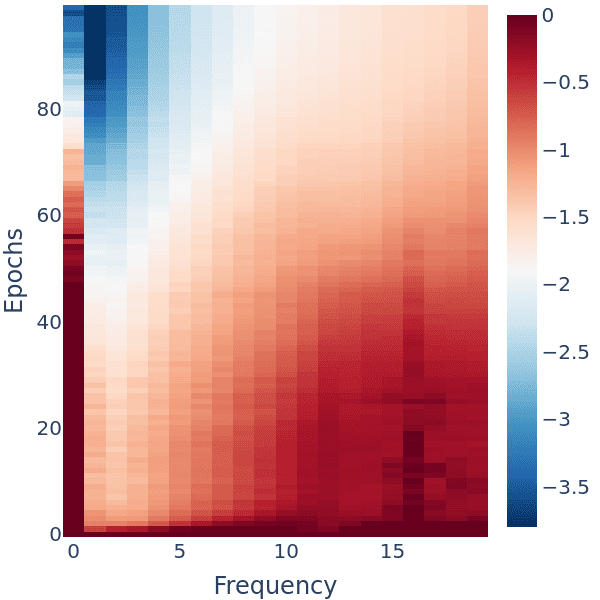}}
    \subfigure[UNO  ]{\includegraphics[width=0.19\textwidth]{  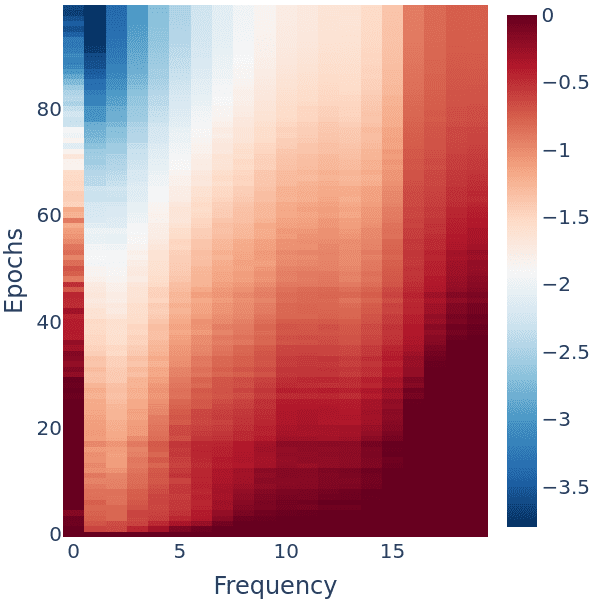}}
    \subfigure[HANO  ]{\includegraphics[width=0.19\textwidth]{  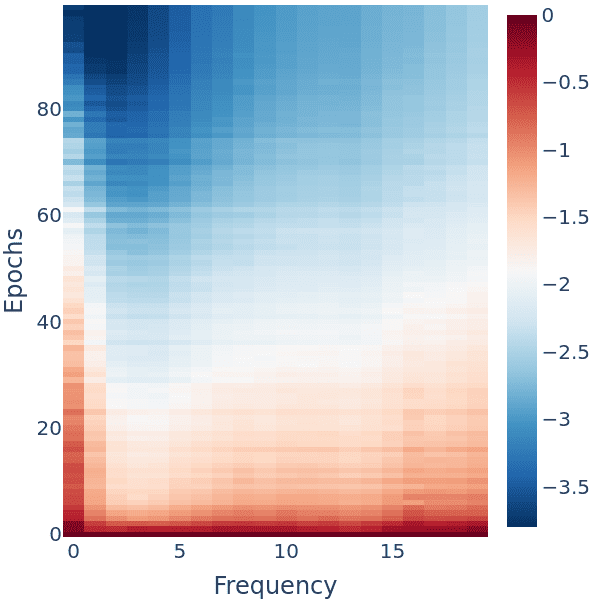}}
    \subfigure[FNO  ]{\includegraphics[width=0.19\textwidth]{  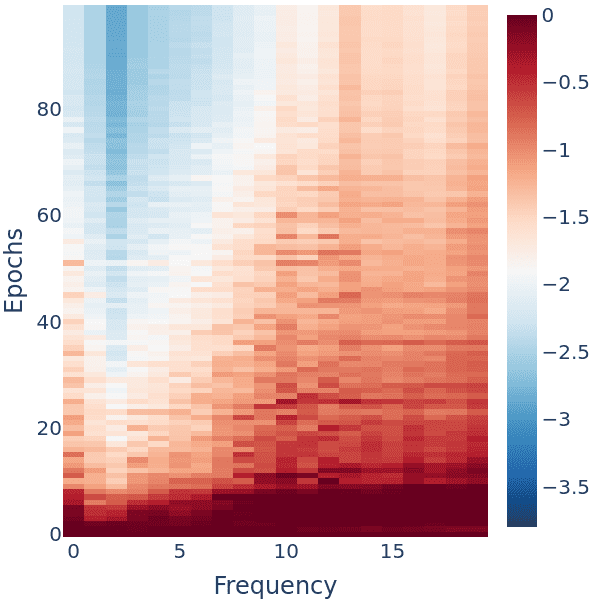}}
    \subfigure[FNO-$H^1$ ]{\includegraphics[width=0.19\textwidth]{  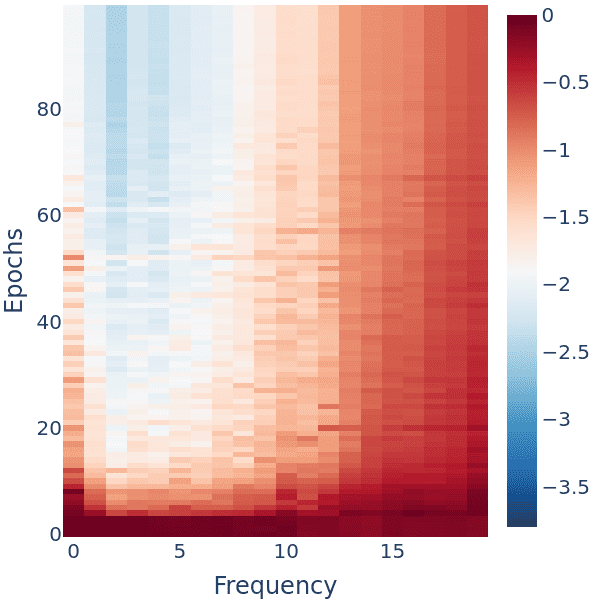}}
    \subfigure[MWT  ]{\includegraphics[width=0.19\textwidth]{  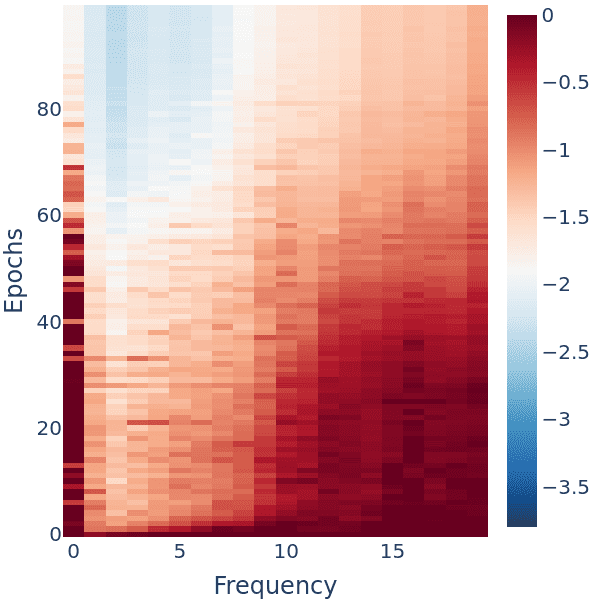}}
    \subfigure[UNO  ]{\includegraphics[width=0.19\textwidth]{  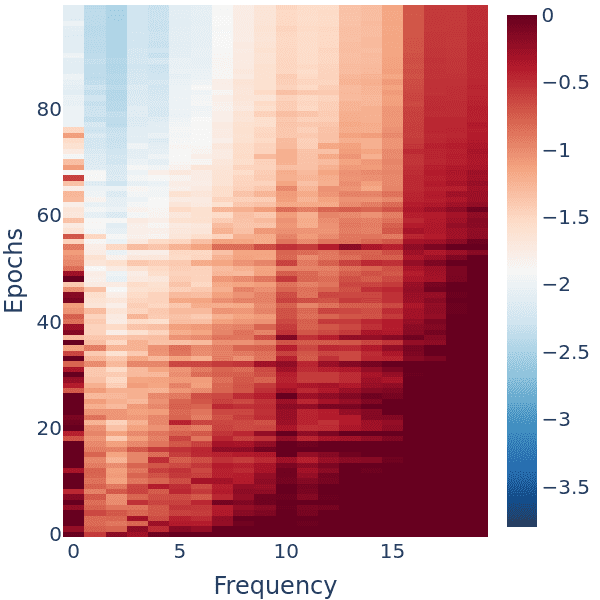}} 
    \subfigure[HANO  ]{\includegraphics[width=0.19\textwidth]{  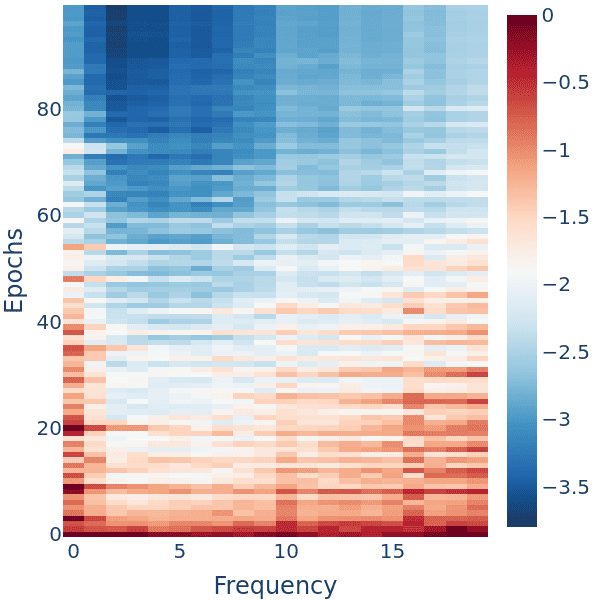}}
    \caption{\textbf{Top:} (a)-(e) show the training error dynamics in the frequency domain. The x-axis shows the first 20 dominating frequencies, from low frequency (left) to high frequency (right). The y-axis shows the number of training epochs. The colorbar shows the normalized $L^2$ error  (with respect to the error at epoch 0) in $\log_{10}$ scale. We compare five different methods; \textbf{Bottom:} (f)-(j) Corresponding testing error dynamics in the frequency domain for different methods. }
    \label{fig:spectral_bias}
\end{figure}

\paragraph{Comparison with Existing Methods}
Our comprehensive evaluation incorporates several contemporary methods:
\begin{itemize}
    \item FNO Variants:  The multiwavelet neural operator (MWT) \cite{gupta2021multiwavelet}, which implements wavelet convolutions on top of FNO's FFT architecture, is also included in this study. 
    \item UNet-based Models: We include the original UNet \cite{ronneberger2015u} and U-NO \cite{ashiqur2022u}, a U-shaped neural operator.
    \item Transformer-based Neural Operators: We also tested Galerkin Transformer (GT) \cite{cao2021choose}, and SWIN Transformer \cite{liu2021swin}, a general-architectured multiscale vision transformer. 
    \item $H^1$-loss evaluation ablation study: we train specific models (FNO2D, U-NO) using the $H^1$ loss to understand its effect, leading to the variants FNO2D $H^1$ and U-NO $H^1$.{   We have also trained a variant of HANO using the $L^2$ loss function, which we refer to as HANO $L^2$. This variant, along with the other variants we have discussed, is included in Table \ref{tab:bench}. }
\end{itemize}
In experiments, we found that \textsc{HANO} outperforms other neural operators in all tasks. 
The efficacy of the $H^1$ loss is noticeable across architectures; for instance, observe the performance difference between FNO2D, U-NO and their respective $H^1$-loss-trained variants. Additionally, the modifications in \textsc{FNO-cnn} notably elevate its performance with only a modest increase in runtime. As depicted in Figure \ref{fig:smooth&rough}, transitioning from Darcy smooth to Darcy rough and then to multiscale trigonometric problems, the enhancement in high-frequency components highlights \textsc{HANO}'s increasing advantage over other methods.

\paragraph{Comparison of Solutions/Derivatives for More Neural Operators}

We show the coefficient, reference solution from Multiscale trigonometric dataset, and the comparison with other operator learning models such as GT, SWIN, and MWT in Figure \ref{fig:gamblet_fig2}. HANO 
resolves the finer scale oscillations more accurately, as reflected by the predicted derivatives in (d) of Figure \ref{fig:gamblet_fig2}.
\begin{figure}[H]
    \centering 
    \subfigure[coefficient in $\log_{10}$ scale]{\includegraphics[width=0.24\textwidth]{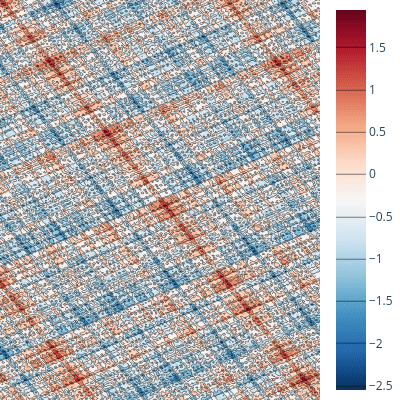}}
    \subfigure[reference solution in 2D]{\includegraphics[width=0.24\textwidth]{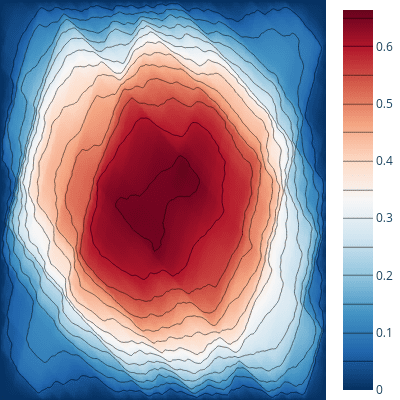}}
    \subfigure[1D slices of the predicted solutions]{\includegraphics[width=0.24\textwidth]{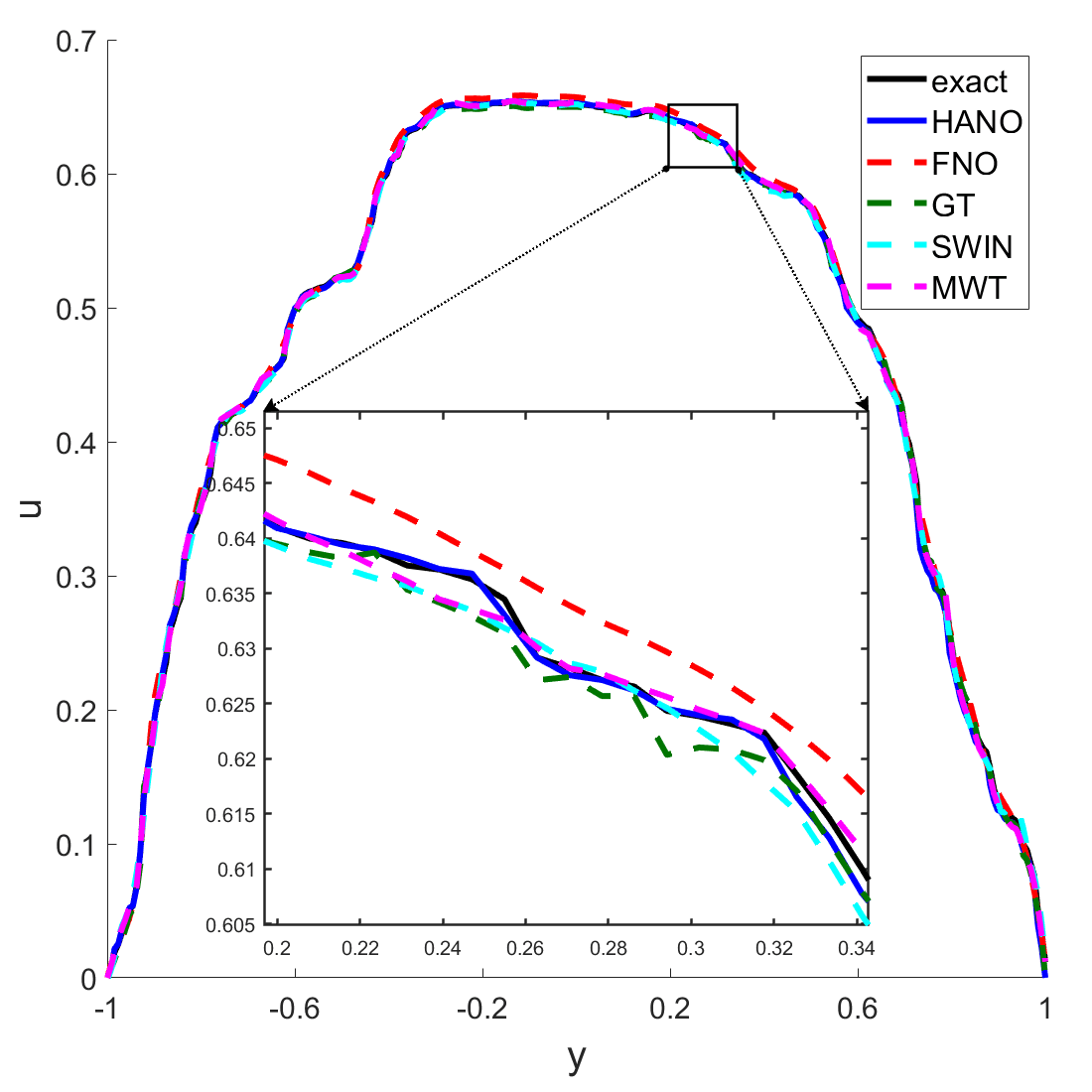}}
    \subfigure[1D slices of the predicted solution derivatives]{\includegraphics[width=0.24\textwidth]{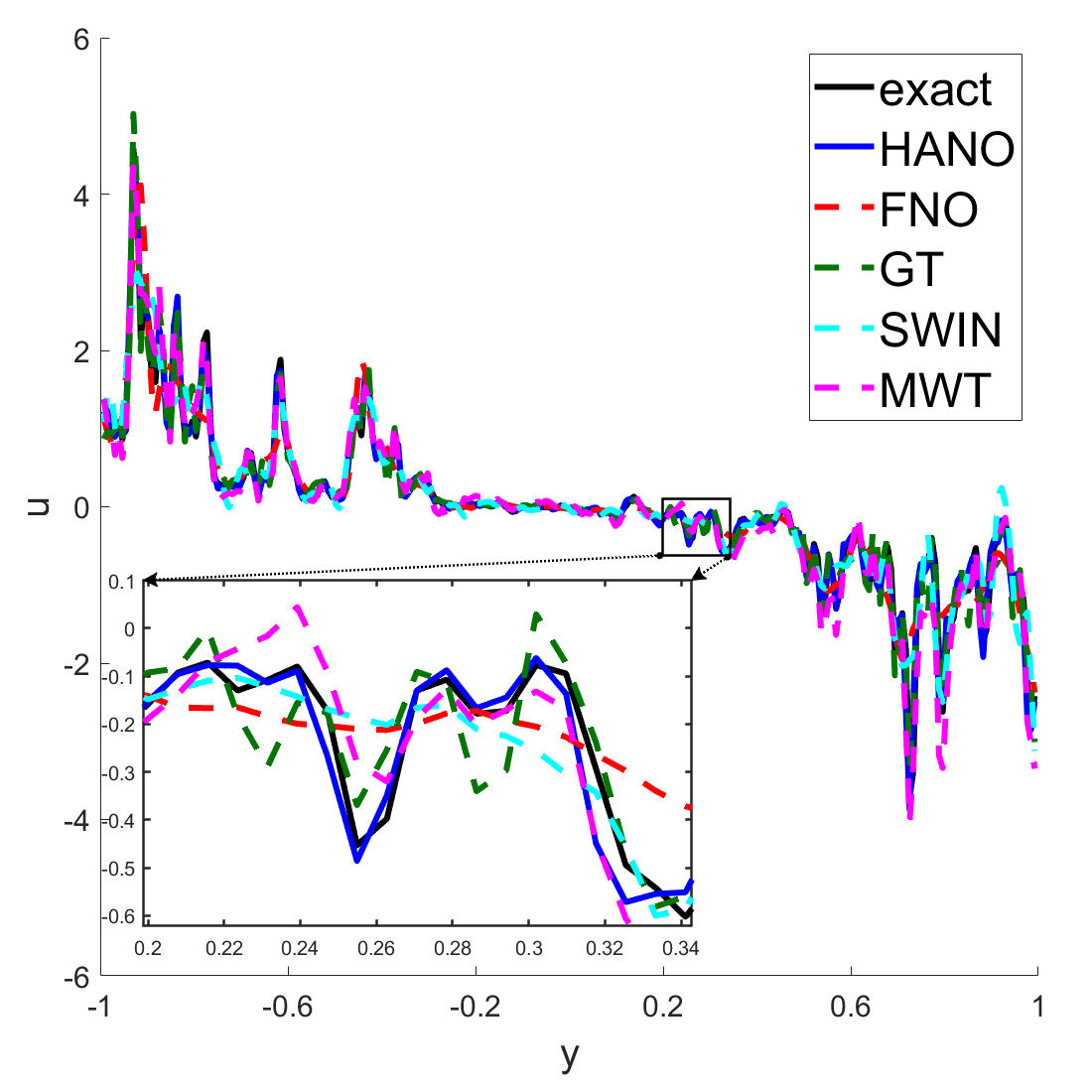}}
    \caption{(a) multiscale trigonometric coefficient, (b) reference solution, (c) comparison of predicted solutions on the slice $x=0$, (d) comparison of predicted derivative $\frac{\partial u}{\partial y}$ on the slice $x=0$.}
    \label{fig:gamblet_fig2}
\end{figure}

\paragraph{Comparison of Error Spectrum for More Neural Operators}

In Figs. \ref{fig:gamblet_fig} (c) and (d), we decompose the error into the frequency domain $[-256\pi, 256\pi]^2$ and plot the absolute error spectrum for HANO and FNO. Here, in \ref{fig:gamblet_error_freqerror}, we also include the absolute error and absolute error spectrum for other baseline models, such as MWT, GT, and SWIN. The comprehensive comparison also demonstrates that existing methods exhibit the phenomena of spectral bias to some degree. This empirical evidence also demonstrates the reason that HANO has the best accuracy in evaluation. 

\begin{figure}[htb]
    \centering
    \subfigure[FNO]{\includegraphics[width=0.19\textwidth]{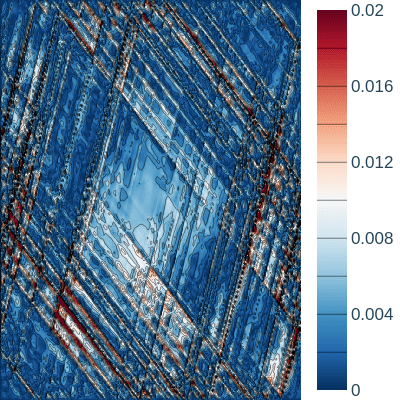}}
    \subfigure[MWT]{\includegraphics[width=0.19\textwidth]{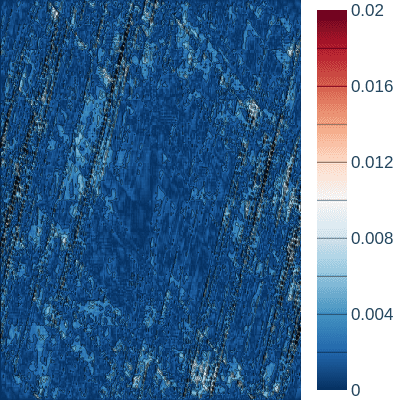}}
    \subfigure[GT]{\includegraphics[width=0.19\textwidth]{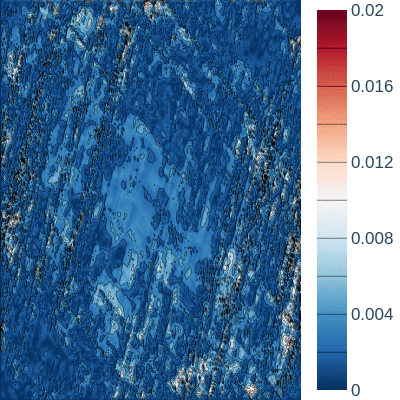}}
    \subfigure[SWIN]{\includegraphics[width=0.19\textwidth]{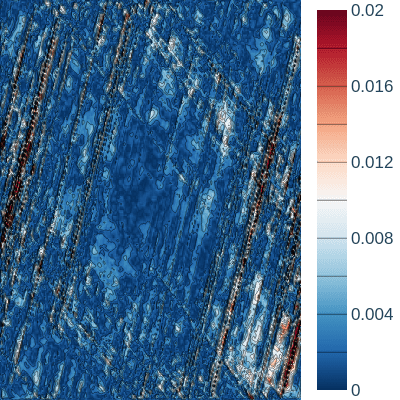}}
    \subfigure[HANO]{\includegraphics[width=0.19\textwidth]{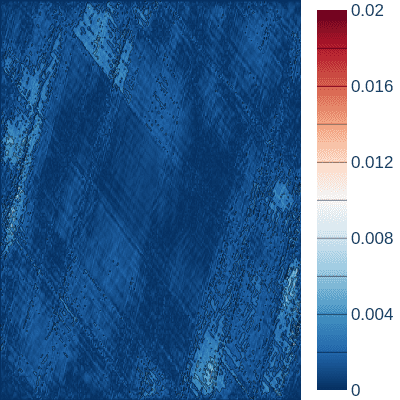}}
    \subfigure[FNO]{\includegraphics[width=0.19\textwidth]{figures/compressed/fno_freqerror.png}} 
    \subfigure[MWT]{\includegraphics[width=0.19\textwidth]{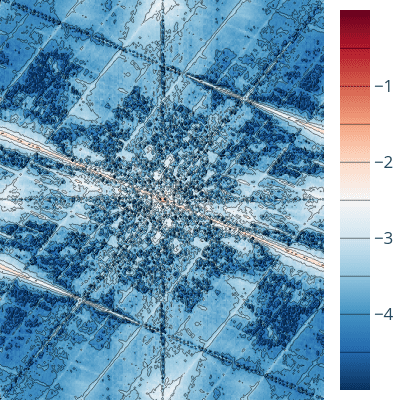}}
    \subfigure[GT]{\includegraphics[width=0.19\textwidth]{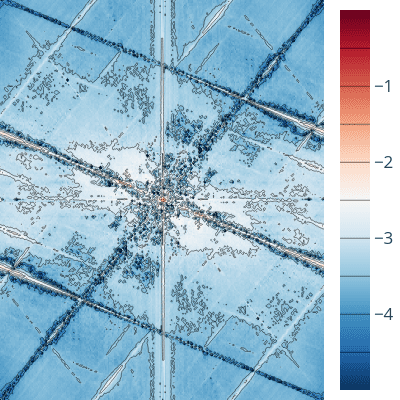}} 
    \subfigure[SWIN]{\includegraphics[width=0.19\textwidth]{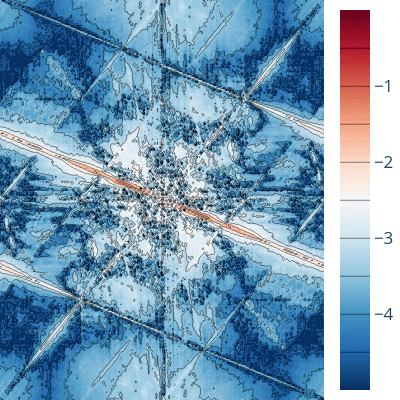}} 
    \subfigure[HANO]{\includegraphics[width=0.19\textwidth]{figures/compressed/hano_freqerror.png}}
    \caption{\textbf{Top:} (a)-(e) absolute error of different operator learning methods; \textbf{Bottom:} (f)-(j) absolute error spectrum in $\log_{10}$ scale of different opeator learning methods.}
    \label{fig:gamblet_error_freqerror}
\end{figure}

\subsection{Navier-Stokes Equation}
\label{sec:experiments:navierstokes}


In this section, we consider the 2D Navier-Stokes equation (NSE) dataset benchmarked in \cite{li2020fourier}. This dataset contains data generated for NSE in the vorticity-streamfunction formulation approximated by a pseudo-spectral solver with a Crank-Nicholson time stepping on the unit torus $\mathsf{T}^2$. For $x\in \mathsf{T}^2$ and $t \in [0, T]$, $\bm{u}(x,t)=\nabla^{\perp} \psi(x,t)$ is the velocity, and $\omega(x,t)$ denotes the vorticity. This formulation then writes
$$
\begin{aligned}
\partial_{t} \omega + \bm{u}  \cdot \nabla \omega  &=\nu \Delta \omega +f , & \\
\Delta \psi + \omega &= 0,
\\
\omega(x, 0) &=\omega_{0}(x), & 
\end{aligned}
$$
where $\omega_0$ is the initial vorticity field, $\nu$ is the viscosity, $f$ is the rotation of a vector forcing term, and $\mathrm{Re}$ is the Reynolds number, defined as $\mathrm{Re} := \frac{\rho u L}{\nu}$
with the density $\rho$ ($ =1$ here). The length scale of the fluid $L$ is set to $1$ here. The Reynolds number is a dimensionless parameter and is inversely proportional to the viscosity $\nu$. 
The operator to be learned is the approximation to 
$$\mathcal{S}: \omega(\cdot, 0\leq t\leq 9) \rightarrow \omega(\cdot, 10\leq t \leq T),$$ 
mapping the vorticity up to time 9 to the vorticity up to some later time $T$. We experiment with viscosities $\nu=10^{-3},10^{-4},10^{-5}$, and decrease the final time $T$ accordingly as the dynamics becomes more turbulent with increasing Reynolds number.

\paragraph{Time dependent neural operator}
Following the standard setup in \cite{li2020fourier}, we fix the resolution as $64 \times 64$ for both training and testing. Ten time-slices of solutions $\omega(\cdot, t)$ at $t=0, ..., 9$ are taken as the input data to the neural operator $\mathcal{N}$ which maps the solutions at 10 given timesteps to their subsequent time step. This procedure, often referred to as the rolled-out prediction, can be repeated recurrently until the final time $T$. For example, the $k$-th rollout is to obtain $\{\omega(\cdot, t_i)\}_{i=k-9}^{k} \mapsto \omega(\cdot, t_{k+1})$.
In Table \ref{tab:navier_tab}, the results are listed for HANO, FNO-3D (FFT in space-time), FNO-2D (FFT in space, and time rollouts), U-Net \cite{ronneberger2015u}, TF-Net \cite{wang2020towards}, ResNet \cite{he2016deep} and DilResNet \cite{stachenfeld2022learned}, and HANO achieves the best performance. 

Furthermore, we also test models on the same Navier-Stokes task with $\nu=10^{-5}$ but introduce an alternative training configuration labeled as $T=20$ (new). Note that all models incorporating these specialized training techniques show consistent performance enhancement compared to the original setup. Including these training tricks contributes to more stable generalization errors across various models, justifying their use in performance comparisons. Hence, both evaluation methods offer an equitable basis for contrasting the efficacy of our approach with existing baselines. We present a comprehensive comparison of the training configurations here. 
\paragraph{Two training setups}
\begin{itemize}
    \item \textbf{Original training setup:} Samples consist of 20 sequential time steps, with the goal of predicting the subsequent 10 time steps from the preceding 10. Using the roll-out prediction approach as described by \cite{li2020fourier}, the neural operator uses the initial 10-time steps to forecast the immediate next time step. This predicted time step is then merged with the prior 9 time steps to predict the ensuing time step. This iterative process continues to forecast the remaining 10 time steps.
    \item \textbf{New training setup:} Our findings indicate that an amalgamation of deep learning strategies is pivotal for the optimal performance of time-dependent tasks. While \cite{li2020fourier} employed the previous 10-time steps as inputs for the neural operator, our approach simplifies this. We find that leveraging just the current step's data, similar to traditional numerical solvers, is sufficient. During training, we avoid model unrolling. Originally, we had 1000 samples, each consisting of 20 sequential time steps. We have transformed these into 19,000 samples, where each sample now comprises a pair of sequential time steps. These are then shuffled and used to train the neural operator to predict the subsequent time step based on the current one. For testing, we revert to the roll-out prediction method as only the initial time step's ground truth is available. These methods align with some techniques presented in \cite{brandstetter2022message}. As shown in Table \ref{tab:navier_tab}, the second approach provides better performance. 
    Also note that FNO-3D performs FFT in both space and time, while other models are designed with more conventional marching-in-time schemes, such that they are applied in an autoregressive fashion. Therefore, the new alternative training configuration is not applicable to FNO-3D.

\end{itemize}


\begin{table*}
    \centering
    \begin{tabular}{l|c|ccccc} 
    \toprule
    &  &  $T=50 $ & $T=30$ & $T=30$   & $T=20$ & $T=20$ (new)\\
   \textbf{Model} & \#\textbf{Parameters} &  $\nu=1 \mathrm{e}-3$ & $\nu=1 \mathrm{e}-4$ & $\nu=1 \mathrm{e}-4$  & $\nu=1 \mathrm{e}-5$ & $\nu=1 \mathrm{e}-5$ \\
     & &  $N=1000$ & $N=1000$ & $N=10000$ &   $N=1000$ &   $N=1000$ \\
    \toprule FNO-3D & $6,558,537$ &  $0.0086$ & $0.1918$ & $0.0820$ & $0.1893$ & --- \\
    FNO-2D &  $2,368,001$ & $0.0128$ & $0.1559$ & $0.0834$   & $0.1556$ & $0.0624$ \\
    U-Net & $24,950,491$  & $0.0245$ & $0.2051$ & $0.1190$ &   $0.1982$ & $0.1058$ \\
    TF-Net & $7,451,724$ &  $0.0225$ & $0.2253$ & $0.1168$ &  $0.2268$ & $0.1241$  \\
    ResNet & $266,641$ &  $0.0701$ & $0.2871$ & $0.2311$ & $0.2753$ & $0.1518$ \\
    DilResNet  & $586,753$ & 0.0315 & 0.2561 & 0.2081 & 0.2315 & $0.1641$\\	
    \textsc{HANO} & $7,629,350$   & $\mathbf{0.0074}$  & $\mathbf{0.0557}$ &
    $\mathbf{0.0179}$ &  $\mathbf{0.0482}$ & $
    \mathbf{0.0265}$\\
    \toprule
    \end{tabular}
    \caption{Benchmark for the Navier-Stokes equation. $64 \times 64$ resolution is used for both training and testing. $N$ is the training sample size, either $N=1000$ or $N=10000$. The number of testing samples is 100 or 1000, respectively.  We use the $L^2$ loss function, the Adam optimizer, and the OneCycleLR scheduler with a cosine annealing strategy. The learning rate starts with $1\times 10^{-3}$ and decays to $1 \times 10^{-5}$. All models were trained for 500 epochs {  and use $L^2$ loss function}.}
    \label{tab:navier_tab}
\end{table*}
\begin{figure}[H]
    \centering
    \subfigure[Example rollout trajectories of the HANO model]{\includegraphics[width=.7\textwidth, height=.4\textwidth]{  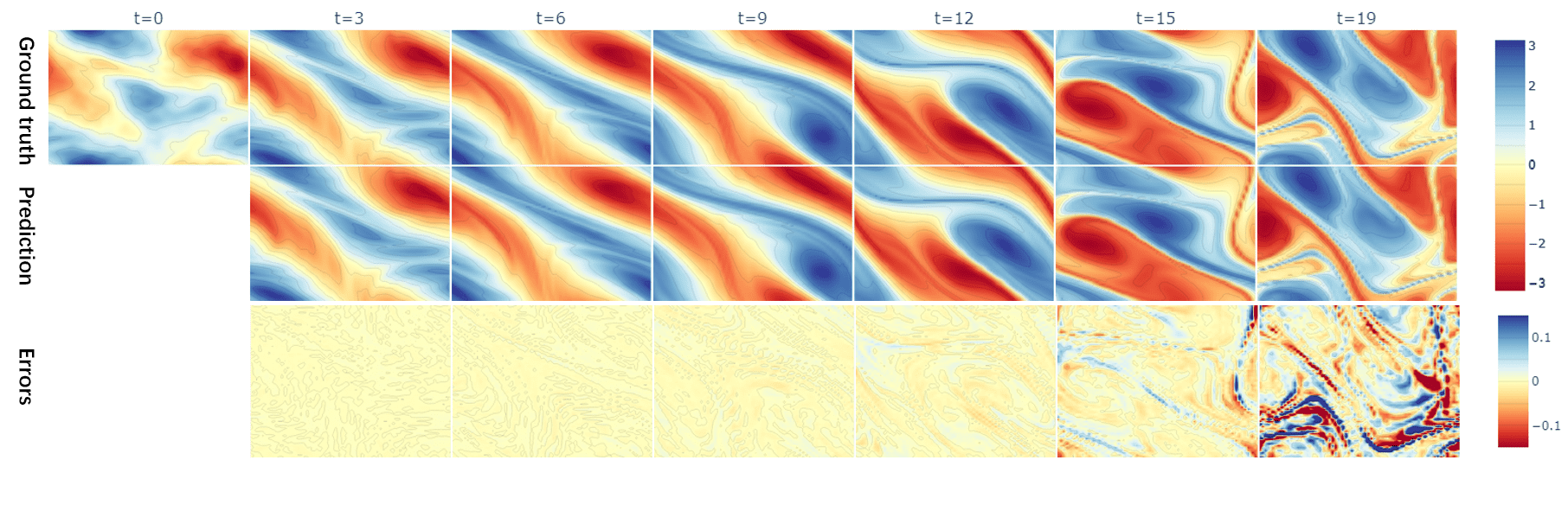}}
    \subfigure[{  Error vs. Time}]{\includegraphics[width=.28\textwidth, height=.26\textwidth]{  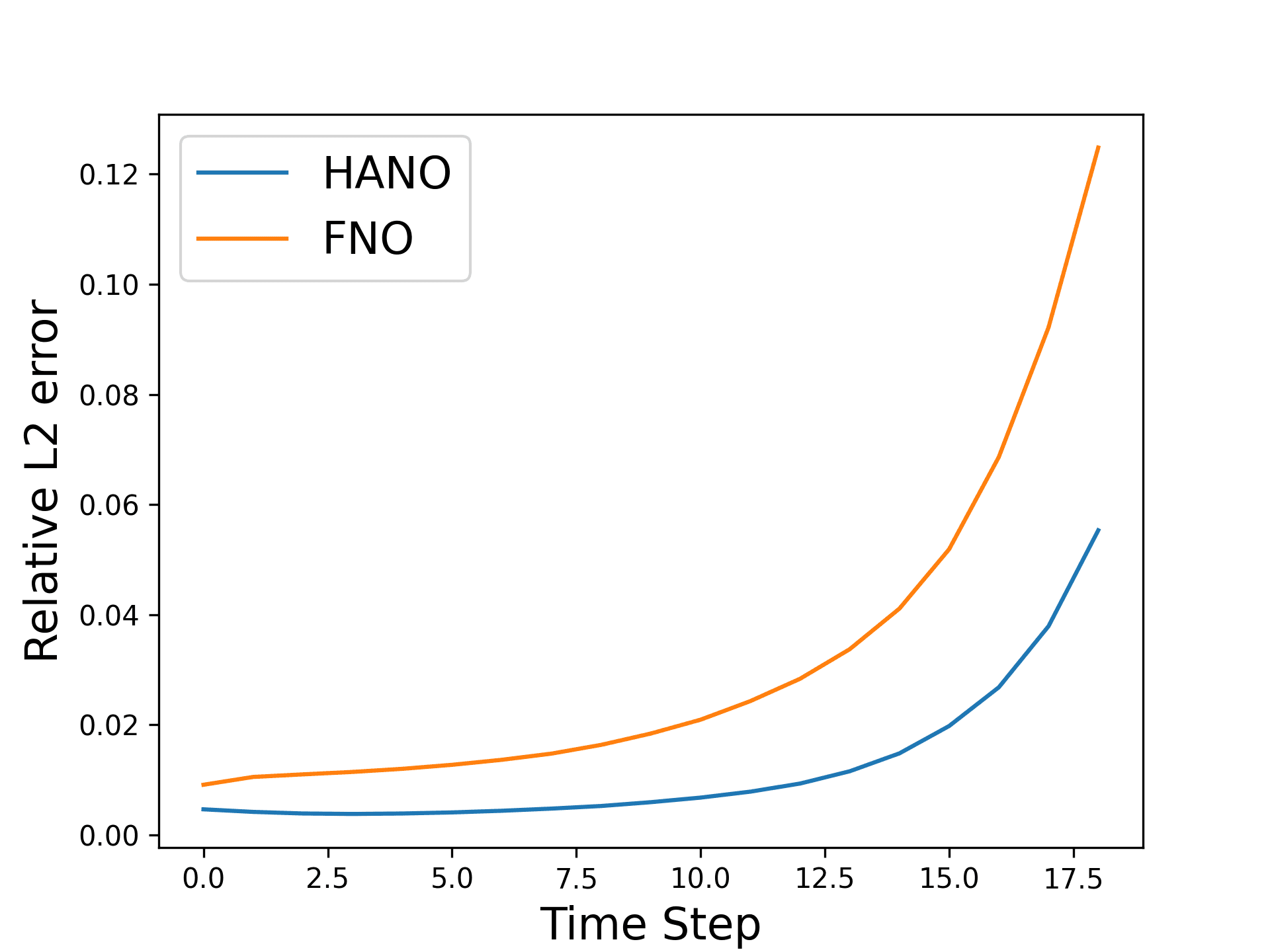}}
    \caption{Benchmark for the Navier-Stokes equation with $\nu = 1e-5$}
    \label{fig:Ns_roll}
\end{figure}
\subsection{Helmholtz equation}
\label{sec:experiments:helmholtz}

We test the performance of HANO for the acoustic Helmholtz equation in highly heterogeneous media as an example of multiscale wave phenomena, whose solution is considerably expensive for complicated and large geological models. This example and training data are taken from \cite{deHoop2022cost}, for the Helmholtz equation on the domain $\Omega:=(0,1)^2$. Given frequency $\omega=10^3$ and wavespeed field $c: \Omega \rightarrow \mathbb{R}$, the excitation field $u: \Omega \rightarrow \mathbb{R}$ solves the equation
$$
\left\{
\begin{aligned}
\left(-\Delta-\frac{\omega^2}{c^2(x)}\right) u &=0 & & \text { in } \Omega , \\
\frac{\partial u}{\partial n} &=0 & & \text { on } \partial \Omega_1, \partial \Omega_2, \partial \Omega_4, \\
\frac{\partial u}{\partial n} &=1 & & \text { on } \partial \Omega_3,
\end{aligned}
\right.
$$
where $\partial\Omega_3$ is the top side of the boundary, and $\partial\Omega_{1,2,4}$ are other sides. The wave speed field is $c(x)=20+\tanh (\tilde{c}(x))$, where $\tilde{c}$ is sampled from the Gaussian random field
$\tilde{c} \sim \mathcal{N}(0, \left(-\Delta+\tau^2\right)^{-d})$, 
where $\tau=3$ and $d=2$ are chosen to control the roughness. The Helmholtz equation is solved on a $100\times100$ grid by finite element methods. We aim to learn the mapping from  $\vc\in \mathbb{R}^{100\times100}$ to $\vu \in \mathbb{R}^{100\times100}$ as shown in Figure \ref{fig:helm_a_u}.
In this example, following the practice in \cite{deHoop2022cost}, a  training dataset of size $4000$ examples is adopted, while the test dataset contained $800$ examples. All models were trained for 100 epochs.

\begin{figure}[H]
    \centering
    \subfigure[wavespeed field $\vc$]{\includegraphics[width=0.3\textwidth]{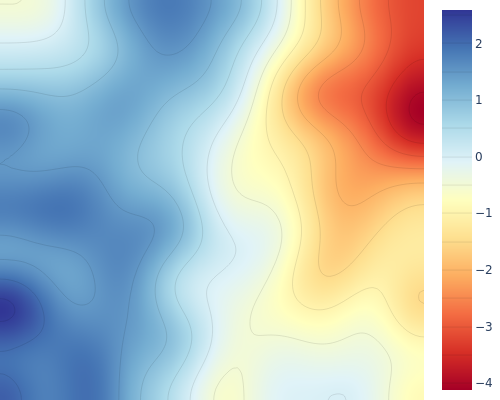}}
    \subfigure[groud truth $\vu$]{\includegraphics[width=0.3\textwidth]{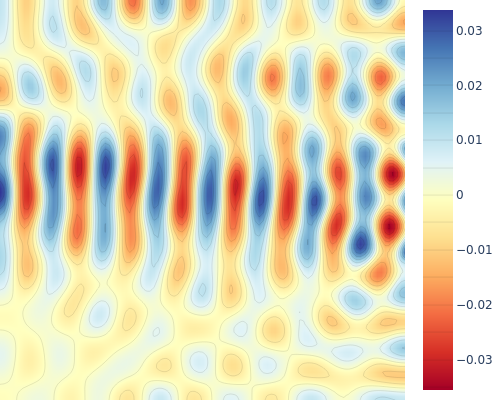}}
    \subfigure[HANO prediction $\hat{\vu}$]{\includegraphics[width=0.3\textwidth]{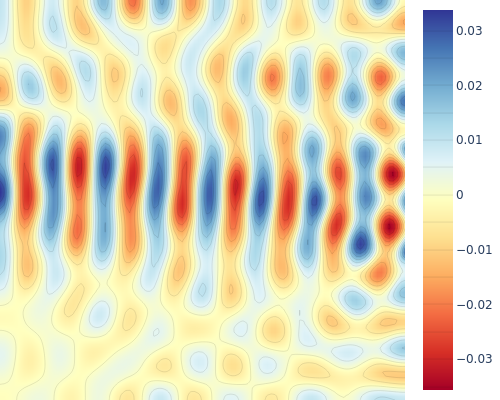}}
    \caption{The mapping $\vc \mapsto \vu$}
    \label{fig:helm_a_u}
\end{figure}


\begin{table}[H]

 \caption{Performance on the Helmholtz benchmark.}
    \centering
    \begin{tabular}{lrrrr}
    \toprule
      \textbf{Model}  &time  &params(m)  & $L^2 (\times 10^{-2})$ &$H^1 (\times 10^{-2})$\\
      \midrule
       \textsc{FNO2D} &  6.2 &  16.93  & 1.25 &7.66\\
       \textsc{UNet} & 7.1  & 17.26   & 3.81& 23.31\\
       \textsc{DilResNet} &10.8  &1.03 & 4.34 &34.21 \\
       \textsc{U-NO} & 21.5  &  16.39  & 1.26 &8.03\\
       \textsc{LSM} & 28.2  &   4.81 & 2.55 &10.61\\
        \textsc{HANO} & 13.1 &  11.35  & $\bm{0.95}$ & $\bm{6.10}$ \\
    \bottomrule
    \end{tabular}

    \label{tab:helmholtz}
\end{table}

 { For issues like high-frequency problems, especially Helmholtz equations with a large wavenumber, HANO might exhibit limitations due to the inherent challenge in approximating high-frequency components locally. Such scenarios highlight the difficulty of capturing the propagation of high-frequency solutions through local attention mechanisms alone. This challenge is analogous to the limitations faced by classical numerical methods in devising straightforward hierarchical matrix formulations for the Green's function of Helmholtz equations with large wavenumbers.  It is noteworthy that the wavenumber is modest in this benchmark, and HANO surpasses other baseline methods, by a relatively modest margin as shown in Table \ref{tab:helmholtz}. } The error of our rerun FNO is comparable with the reported results in \cite{deHoop2022cost}. We note that the four models benchmarked for the Helmholtz equation in \cite{deHoop2022cost}, including FNO and DeepONet, failed to reach a relative error less than $1\times 10^{-2}$. We also compare the evaluation time of the trained models in Table \ref{tab:helmholtz}. Compared to HANO, FNO has both a larger error and takes longer to evaluate. 
Moreover, FNO is known to be prone to overfitting the data when increasing the stacking of spectral convolution layers deeper and deeper \cite{tran2023factorized}. UNet, as a CNN-based method, can evaluate much faster (30 times faster than HANO) but has the worst error (60 times higher than HANO).

\subsection{Training Dynamics}
\label{sec:experiments:trainingdynamics}

We present the dynamics of training and testing error over 100 epochs of training in Figure \ref{fig:converge}. We compare HANO trained with $H^1$ and $L^2$ loss functions, and show the evolution of errors as well as the loss curves during the training process. The comparison shows that HANO with $H^1$ loss achieves lower training and testing errors. It also suggests that the  $H^1$ loss function reduces the generalization gap (measured by the difference between training error and testing error), while $L^2$ loss function fails to do so.

\begin{figure}[H]
    \centering
    \subfigure[Training curve by using $H^1$ loss]
    {\includegraphics[width=0.35\textwidth]{  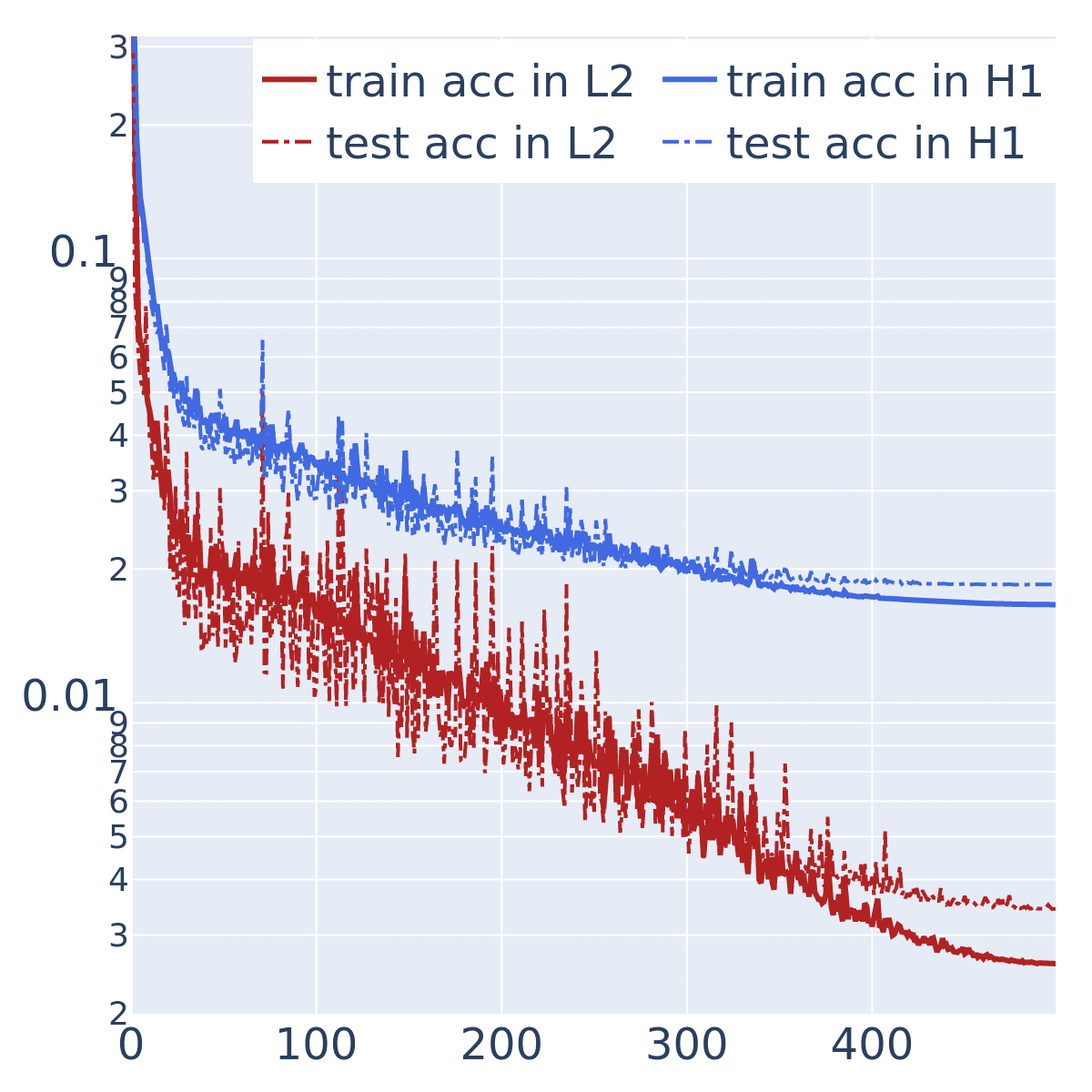}}\qquad
    \subfigure[Training curve by using $L^2$ loss]
    {\includegraphics[width=0.35\textwidth]{  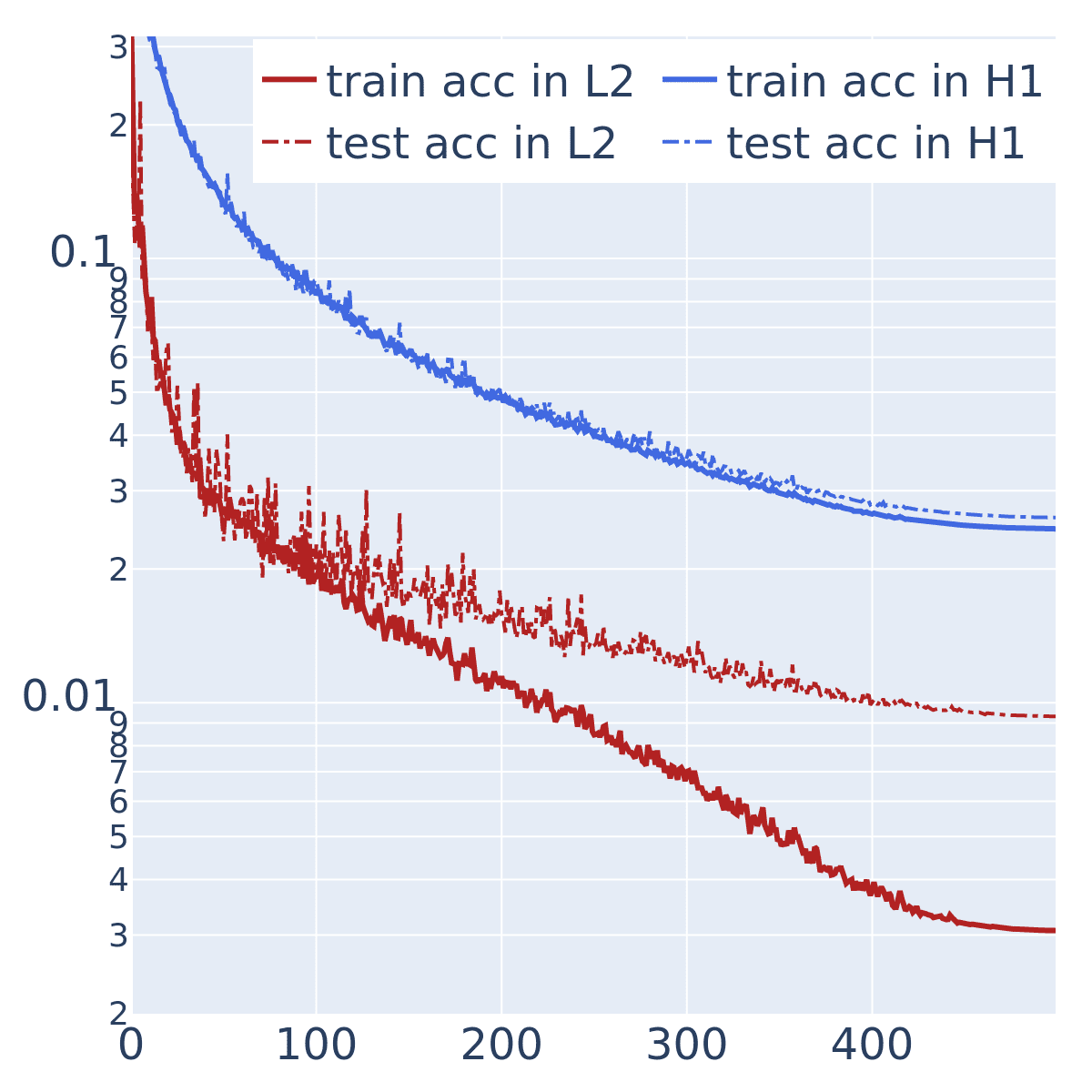}}
    \caption{Comparison of training dynamics between HANO trained with $H^1$ loss and $L^2$ loss}
    \label{fig:converge}
\end{figure}

\subsection{Memory Usage} 

We report the memory usage of different models for the Darcy smooth (with resolution 211$\times$211) and Darcy rough (with resolution 256$\times$256) benchmarks in Table \ref{tab:memory}. The table shows that the memory usage of HANO remains stable across resolutions. For the higher resolution of 256$\times$256, both MWT and GT consume more CUDA memory than HANO, even though HANO achieves much higher accuracy.

\begin{table}[ht]
    \begin{center}
    \small
    \begin{tabular}{llrrr}
    \toprule
    &\textbf{Model} &  \textbf{Darcy smooth} & \textbf{Darcy rough}  \\ 
    \midrule
     &\textsc{FNO} 
    & $0.72$  & $1.04$\\
    &\textsc{GT} 
    & $1.40$  & $4.53$ \\
     &\textsc{MWT} 
    & ---   & $1.27$ \\
    &\textsc{HANO} 
    & $0.89$  & $1.21$\\

    \bottomrule\\[-2.5mm]
    \end{tabular}   
    \end{center}
    \caption{The memory usage(GB) of different models by using torchinfo.}
    \label{tab:memory}
\end{table}





\subsection{Discretization invariance}
\label{sec:appendix:discretizationinvariance}

FNO achieves discretization invariance through Fourier interpolation, enabling models trained on low-resolution data to handle high-resolution input. By incorporating suitable interpolation operators, HANO can achieve a comparable capability. Specifically, it can be trained on lower resolution data but evaluated at higher resolution, without requiring any higher resolution data during training (achieving zero-shot super-resolution). 

We conducted the experiments following the same setup as in \cite{li2020fourier} for the multiscale trigonometric coefficient benchmark. The models were trained on $64\times64$, $128\times128$, and $256\times256$ resolutions, and tested on $128\times128$, $256\times256$, and $512\times512$ resolutions, respectively. Results in Table \ref{tab:Discretization_invariance} show that HANO incorporating linear interpolation is more stable than FNO.

\begin{table}[ht]
    \begin{center}
    \small
    \begin{tabular}{llrrrrrrr}
    \toprule
    &  & \multicolumn{3}{c}{\textbf{FNO}} & \multicolumn{3}{c}{\textbf{HANO}} \\ \cmidrule(lr){1-2}\cmidrule(lr){3-5}\cmidrule(lr){6-8}
    & \diagbox{Train}{Test} &  $128$ & $256$ & $512$ &  $128$ & $256$ & $512$ \\ 
    \midrule
    & $64$
    & $5.2808$ & $7.9260$ & $9.1054$
    & $1.3457$ & $1.3557$ & $1.3624$ \\
    & $128$
    &  & $3.9753$ & $6.0156$
    &  & $0.6715$ & $0.6835$ \\
    & $256$ 
    &  & & $3.1871$
    &  &  & $0.5941$ \\
    \hline
    \end{tabular}
    \end{center}
    \caption{Comparison of discretization invariance property for HANO and FNO for the multiscale trigonometric coefficient benchmark. The relative $L^2$ error ($\times 10^{-2}$) with respect to the reference solution on the testing resolution is measured.}
    \label{tab:Discretization_invariance}
\end{table}

\subsection{Datasets and Code}
{  
The code and datasets can be accessed at the following location \url{https://github.com/xlliu2017/HANO}.
}
\section{Conclusion}
\label{sec:conclusion}

In this work, we investigated the ``spectral bias'' phenomenon commonly observed in multiscale operator learning. To our best knowledge, we conducted the first in-depth numerical study of this issue. We proposed HANO, a hierarchical attention-based model to mitigate the spectral bias. HANO employs a fine-coarse-fine V-cycle update and an empirical $H^1$ loss to recover fine-scale features in the multiscale solutions. Our experiments show that HANO outperforms existing neural operators on multiscale benchmarks in terms of accuracy and robustness. 

\emph{Limitation and outlook}: (1) HANO's current implementation requires a regular grid, and extending it to data clouds and graph neural networks could offer new opportunities to exploit its hierarchical representation. (2) The current attention-based operator in HANO can achieve discretization invariance using simple interpolation \cite{2023JMLRNeural} (e.g. see Section \ref{sec:appendix:discretizationinvariance}). However, either simple interpolation or Fourier interpolation (used by FNO) may suffer from aliasing errors in the frequency domain, as indicated by our experiments and recent analysis \cite{kovachki2021universal,lanthaler2022error}. Better balance between discretization invariance and model accuracy may be achieved with proper operator-adaptive sampling and interpolation techniques.


\section*{Acknowledgments}
BX and LZ are partially supported by the Shanghai Municipal Science and Technology Project 22JC1401600, China's National Key Research and Development Projects (2023YFF0805200), NSFC grant 12271360, and the Fundamental Research Funds for the Central Universities. SC is partially supported by the National Science Foundation award DMS-2309778.


\newpage

\bibliography{ref}
\bibliographystyle{elsarticle-num}















\end{document}